\documentclass{article} % For LaTeX2e
\usepackage{iclr2027_conference,times}

\usepackage{amsmath,amsfonts,bm}

\def\eqref#1{equation~\ref{#1}}
\def\1{\bm{1}}

\def\rvu{{\mathbf{i}}}

\def\rvu{{\mathbf{u}}}

\def\rvz{{\mathbf{z}}}

\def\vzero{{\bm{0}}}
\def\vone{{\bm{1}}}
\def\vmu{{\bm{\mu}}}

\def\va{{\bm{a}}}

\def\vp{{\bm{p}}}

\def\vu{{\bm{u}}}
\def\vv{{\bm{v}}}

\def\vx{{\bm{x}}}

\def\vz{{\bm{z}}}

\def\mA{{\bm{A}}}

\def\mI{{\bm{I}}}

\def\mP{{\bm{P}}}

\def\mS{{\bm{S}}}

\def\mX{{\bm{X}}}

\def\mZ{{\bm{Z}}}

\def\mSigma{{\bm{\Sigma}}}

\DeclareMathAlphabet{\mathsfit}{\encodingdefault}{\sfdefault}{m}{sl}
\SetMathAlphabet{\mathsfit}{bold}{\encodingdefault}{\sfdefault}{bx}{n}

\def\sA{{\mathbb{A}}}
\def\sB{{\mathbb{B}}}
\def\sC{{\mathbb{C}}}

\def\sI{{\mathbb{I}}}
\def\sJ{{\mathbb{J}}}

\newcommand{\E}{\mathbb{E}}

\newcommand{\R}{\mathbb{R}}

\usepackage{hyperref}
\usepackage{url}
\usepackage{amsmath,amssymb}
\usepackage{graphicx}
\usepackage{wrapfig}
\usepackage{booktabs}
\usepackage{multirow}
\usepackage[capitalize]{cleveref}   % provides \Cref used in the section files
\crefname{appendix}{Appendix}{Appendices}
\Crefname{appendix}{Appendix}{Appendices}
\usepackage{amsmath,amssymb,amsthm}

\newtheorem{theorem}{Theorem}
\newtheorem{proposition}[theorem]{Proposition}
\newtheorem{lemma}[theorem]{Lemma}
\newtheorem{corollary}[theorem]{Corollary}

\title{ATLAS: Aligned Transport of Latent Structure for Reliable World Model Planning}

\author{Ke Fang\thanks{Equal contribution.}, \ Yupu Yao\footnotemark[1], \ Lu Cheng\\
Pennsylvania State University\\
\texttt{\{kbf5519, yjy5437, lqc5822\}@psu.edu}}

\iclrfinalcopy % Uncomment for camera-ready version, but NOT for submission.
\begin{document}

\renewcommand{\thefootnote}{\fnsymbol{footnote}}
\maketitle
\lhead{ATLAS: Aligned Transport of Latent Structure for Reliable World Model Planning}
% \lhead{Under review as a conference paper at ICLR 2027}
\renewcommand{\thefootnote}{\arabic{footnote}}
\setcounter{footnote}{0}

\begin{abstract}
% Latent world models rely on representation geometry for planning, yet regularizing the latent marginal alone does not determine the state-to-state relationships used for action selection. We show that this can cause planning-relevant novelty structure to be weakened as representations are transformed into the final latent used by the planner. We introduce Aligned Transport of Latent Structure (ATLAS), a training objective that explicitly preserves relational geometry while calibrating the global latent distribution. ATLAS transfers normalized pairwise structure from an informative encoder representation to the planning latent and uses Wasserstein embedding matching (WEMReg) to calibrate its marginal through one-dimensional Wasserstein-2 transport. Our analysis shows that relational preservation and marginal calibration impose non-redundant constraints, and connects finite-candidate planning stability to relational distortion, latent-scale mismatch, and prediction error. Instantiated in LeWM, ATLAS improves mean goal-reaching success across PushT, TwoRoom, and OGBench-Cube on both lower- and higher-novelty evaluation subsets, with the largest gain on higher-novelty TwoRoom episodes. Representation and rollout diagnostics further show stronger novelty-related structure in the planning latent, improved marginal calibration, and lower multi-step prediction error. Together, these results highlight preservation of planning-relevant latent geometry as an important ingredient for reliable world-model planning. Code is available at \url{https://anonymous.4open.science/r/atlas-world-model-72C4/}.
Latent world models rely on representation geometry for planning, yet regularizing the latent marginal alone does not determine the state-to-state relationships used for action selection. We show that this can cause planning-relevant novelty structure to be weakened as representations are transformed into the final latent used by the planner. We introduce Aligned Transport of Latent Structure (ATLAS), a training objective that explicitly preserves relational geometry while calibrating the global latent distribution. ATLAS transfers normalized pairwise structure from an informative encoder representation to the planning latent and uses Wasserstein embedding matching (WEMReg) to calibrate its marginal through one-dimensional Wasserstein-2 transport. Our analysis shows that relational preservation and marginal calibration impose non-redundant constraints, and connects finite-candidate planning stability to relational distortion, latent-scale mismatch, and prediction error. Instantiated in LeWM, ATLAS improves mean goal-reaching success across PushT, TwoRoom, and OGBench-Cube on both lower- and higher-novelty evaluation subsets, with the largest gain on higher-novelty TwoRoom episodes. Representation and rollout diagnostics further show stronger novelty-related structure in the planning latent, improved marginal calibration, and lower multi-step prediction error. Together, these results highlight preservation of planning-relevant latent geometry as an important ingredient for reliable world-model planning. Code is available at \url{https://github.com/Annie969/atlas-world-model}.
\end{abstract}

\section{Introduction}
\label{sec:intro}
\label{sec:intro}

\begin{wrapfigure}{r}{0.46\textwidth}
    \centering
    \vspace{-0.4inch}
    \includegraphics[width=0.44\textwidth]{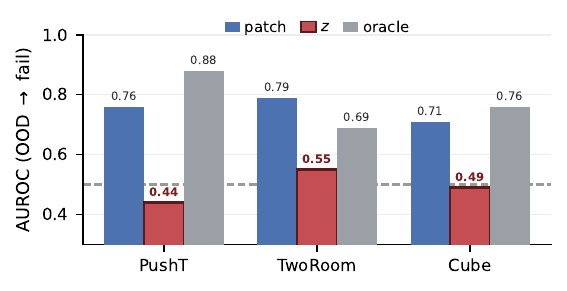}
    \caption{\textbf{An OOD-related diagnostic gap in latent planning on PushT.}
   }
    \label{fig:ood-blind}
    \vspace{-0.8\baselineskip}
\end{wrapfigure}

%   AUROC for predicting planning failure from k-nearest-neighbor noveltyscores in LeWM at goal offset 50. The same diagnostic is substantially more predictive when computed from mean-pooled patch features or true environment states than from the final planning latent. Chance AUROC is 0.5
Latent world models (WMs) plan by predicting where different candidate actions will lead and selecting the action whose predicted outcome is closest to a desired goal \citep{pmlr-v97-hafner19a, ICLR2024_cf73d57b}. The geometry of the learned representation space is therefore part of the planning problem: states that are behaviorally similar should remain appropriately close, while states that differ in important ways should remain distinguishable. This becomes especially important when planning encounters out-of-distribution (OOD) states that are rare or poorly represented in the offline training data.

Many recent WMs learn these representations by predicting future latent features rather than reconstructing raw observations, an approach closely related to joint-embedding predictive architectures (JEPAs) \citep{lecun2022path, 10205476, bardes2024revisitingfeaturepredictionlearning}. A central challenge in such models is preventing representation collapse, where different inputs are mapped to nearly identical features \citep{NEURIPS2020_f3ada80d, 9578004, pmlr-v139-zbontar21a, bardes2024revisitingfeaturepredictionlearning}. Existing approaches address this through anti-collapse regularization that constrains statistical properties of the latent representation, such as its variance, covariance, or overall distribution \citep{bardes2022vicreg,balestriero2025lejepa}. For example, SIGReg encourages latent projections to follow a standard Gaussian distribution \citep{balestriero2025lejepa}. These constraints can keep the representation non-degenerate, but they do not specify which individual states should remain close or far apart. Two latent spaces can therefore have similarly well-behaved global distributions while preserving substantially different relationships between states. For planning, this distinction matters because these relationships directly determine how predicted outcomes are compared with the goal.

We observe this gap in LeWM~\citep{lewm}, a recent JEPA-style WM. We measure how well distance from the training data predicts downstream planning failure at different stages of the learned representation. This signal is substantially stronger in the encoder's mean-pooled patch features than in the final latent used for planning (\Cref{fig:ood-blind}). On PushT \citep{florence2022ibc}, for example, failure-prediction AUROC decreases from $0.76$ in the encoder representation to $0.44$ in the planning latent. Thus, information indicating that a state is unfamiliar is present within the encoder but substantially weakened in the representation ultimately used for planning. This suggests that maintaining a well-behaved latent distribution alone does not ensure preservation of planning-relevant state relationships.

We therefore introduce Aligned Transport of Latent Structure (ATLAS), which explicitly targets two complementary properties of the planning representation. The first is relational preservation: state-to-state relationships present in an informative encoder representation should be retained in the planning latent. Our OOD-recovery objective therefore transfers normalized pairwise distances from the encoder's mean-pooled patch features into the planning representation. The second is marginal calibration: the overall latent distribution should remain well-behaved and resistant to collapse. For this purpose, we introduce Wasserstein embedding matching (WEMReg), which matches one-dimensional projections of the latent distribution to a standard Gaussian using Wasserstein-2 distance~\citep{bonneel2015sliced}. Together with the predictive objective, ATLAS targets three complementary properties of a planning representation: preserving state relationships, maintaining a stable latent distribution, and accurately predicting future states.

Our theoretical analysis shows that relational preservation and marginal calibration impose non-redundant constraints on the latent representation. We further relate representation distortion and rollout prediction error to the stability of finite-candidate goal-conditioned planning.
Across three benchmark datasets, ATLAS improves mean planning success over LeWM on both in-distribution (ID) and OOD episodes, with the largest improvement occurring on OOD episodes. Controlled ablations and representation- and rollout-level diagnostics further support the complementary roles of relational preservation and WEMReg (\Cref{sec:experiments,sec:analysis}).
Overall, our contributions are:

\begin{itemize}
\item We show that a WM can retain useful OOD-related information in its encoder while losing much of that signal in the latent representation used for planning, exposing a gap between anti-collapse regularization and preservation of planning-relevant geometry.

\item We introduce ATLAS, a latent WM that explicitly preserves planning-relevant relational structure while calibrating the global latent geometry through Wasserstein transport.

\item We connect representation distortion and prediction error to planning stability, and evaluate ATLAS across manipulation and navigation tasks using controlled ablations and representation- and rollout-level diagnostics.

\end{itemize}

\section{Related Work}
\label{sec:related}
\textbf{Latent WMs for planning.}
A growing line of work learns WMs in a joint-embedding predictive latent space and
plans by rolling that latent forward. LeWM trains a ViT encoder and an autoregressive latent
predictor end-to-end from pixels and plans by latent Cross-Entropy search~\citep{lewm}. DINO-WM instead fixes a frozen DINOv2 encoder pretrained on
external images and learns dynamics on top of it~\citep{zhou2025dinowm}, PLDM learns latent
dynamics from reward-free offline data~\citep{pldm}, and FAST-WM is a faster LeWM
variant~\citep{fastlewm}. Where these methods change the encoder or the data, ATLAS changes only
the training objective of a from-scratch model, which makes it a controlled modification rather than a different backbone.

% \paragraph{Preventing collapse in joint-embedding learning.}
A fundamental challenge in latent WM is that learned representations can collapse. We need an anti-collapse mechanism, so the encoder does not map every
input to the same point. VICReg does this with explicit variance and covariance penalties on
the embedding~\citep{bardes2022vicreg}, while LeJEPA regularizes the embedding toward an
isotropic Gaussian through a characteristic-function normality test evaluated at a finite set
of frequencies~\citep{balestriero2025lejepa}, the SIGReg mechanism LeWM
adopts~\citep{lewm}. Our diagnosis is that this per-sample Gaussianization,
while effective at preventing collapse, also removes the OOD structure a planner needs. ATLAS
keeps the anti-collapse goal but reaches it with a Wasserstein embedding matching regularization term,
which we show is both lower in design constants and strictly more discriminative than
finite-frequency matching (\Cref{sec:theory}).

\textbf{Optimal transport and sliced distances.}
Matching a distribution to a target through one-dimensional projections underlies the sliced
Wasserstein distance~\citep{bonneel2015sliced}, which replaces an intractable
high-dimensional transport problem with an average of exact one-dimensional ones. We use this
device in the specific form of matching each projected marginal to a standard Gaussian by its
exact quantile transport, which yields a closed-form batch objective and, unlike a
finite-frequency test, penalizes every projected quantile residual (\Cref{sec:theory_sot}).

\textbf{OOD detection and relational distillation.}
Distance to a bank of training features is a standard nonparametric OOD signal: a
$k$-nearest-neighbor distance in representation space separates in- from out-of-distribution
inputs without a parametric density model~\citep{sun2022knnood}. We use this score both as
the diagnostic that reveals the washed-out planning latent (\Cref{sec:intro}) and as the
quantity our theory relates to planning stability (\Cref{sec:theory}). To restore it we
transfer the relational geometry of a stronger representation into a weaker one, in the
spirit of relational knowledge distillation, which matches pairwise structure rather than
absolute activations~\citep{park2019relational}. Here the roles are internal to a single
model: the encoder's own patch tokens act as a fixed anchor whose pairwise-distance geometry
the planning latent is taught to preserve.

\section{ATLAS}
\label{sec:method}
\label{sec:theory}
\label{sec:method-full}

\begin{figure*}[t]
    \centering
    \vspace{-0.3inch}
    \includegraphics[width=0.9\textwidth]{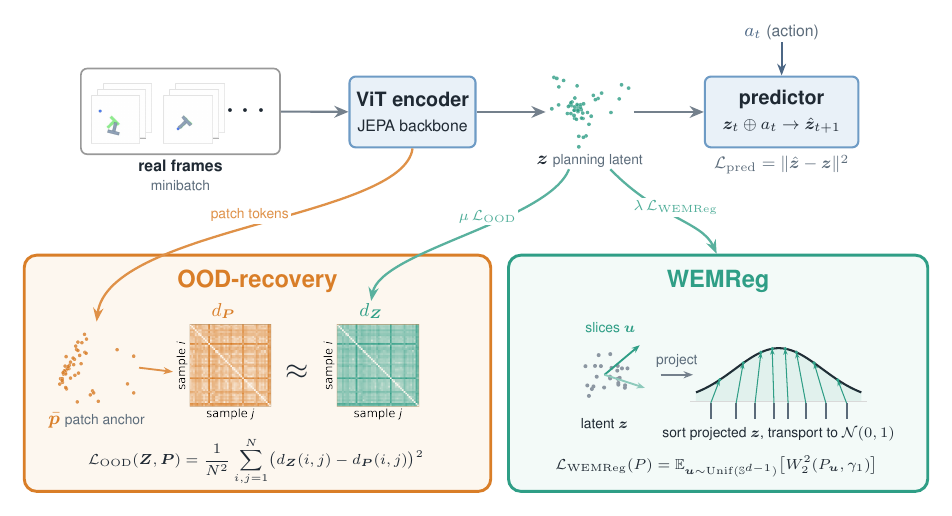}
    \caption{\textbf{Overview of ATLAS.}}
    \label{fig:method}
\end{figure*}

Planning with a latent WM requires a representation that supports accurate prediction under candidate actions, remains well-scaled and non-degenerate, and preserves state-to-state relationships relevant to goal comparison. Standard marginal regularization addresses only the distributional requirement and does not determine which states should remain close or far apart.

ATLAS is designed to address these three requirements (\Cref{fig:method}). The predictive objective learns latent dynamics, WEMReg calibrates the global distribution of the planning latent, and OOD-recovery preserves relational structure from an informative encoder representation. Following the latent predictive architecture of LeWM~\citep{lewm}, a trainable ViT encoder maps each observation $o_t$ to a $d$-dimensional planning latent, $z_t = E_\theta(o_t)$, and a predictor models its evolution under actions:
\begin{equation}
   \widehat{z}_{t+1}
=
P_\phi(z_{t-H+1:t},a_{t-H+1:t}), 
\end{equation}
where the input contains the previous $H$ latent--action pairs and $a_t$ denotes the action at time $t$.

\subsection{Calibrating the latent distribution}
\label{sec:method-sot}
\label{sec:theory_sot}

A useful planning representation should remain well-scaled and non-degenerate. Existing anti-collapse methods impose this through distributional constraints on the latent space. We propose Wasserstein embedding matching (WEMReg), which directly compares the projected distributions \citep{tolstikhin2019wassersteinautoencoders, kolouri2018slicedwassersteinautoencoderembarrassinglysimple}.
For a latent distribution $P$ with finite second moments, we define
\begin{equation}
\mathcal{L}_{\mathrm{WEMReg}}(P)=
\E_{\vu\sim\operatorname{Unif}(\mathbb{S}^{d-1})}
W_2^2(P_\vu,\mathcal{N}(0,1)),
\label{eq:theory_population_sot}
\end{equation}
where $\vu$ is a randomly sampled unit direction and $P_\vu$ denotes the distribution of the scalar projection $\vu^\top \vz$ for $\vz\sim P$. Each direction provides a one-dimensional view of the $d$-dimensional latent distribution, and $W_2^2$ measures how far that projected distribution is from a standard Gaussian. Averaging over random directions therefore encourages Gaussian structure across arbitrary directions, rather than only along individual coordinates.
In practice, we approximate the expectation using $L$ random directions per minibatch, where $L$ is the number of one-dimensional projections being averaged. For each direction, we sort the projected latent values and match them to the corresponding standard-Gaussian quantiles. The resulting one-dimensional Wasserstein cost is therefore determined by the discrepancy between the ordered projected samples and their Gaussian targets.

LeWM instead uses SIGReg for marginal regularization, which tests Gaussianity by matching characteristic functions at a finite set of frequencies. WEMReg targets the same marginal calibration objective, but compares the full projected distributions through Wasserstein distance. This distinction matters because agreement at finitely many characteristic-function frequencies does not uniquely determine a distribution: non-Gaussian discrepancies may remain undetected even when the mean and covariance are correct. We formalize this separation in \Cref{thm:cf_ambiguity}; the precise finite-frequency objective and proof are provided in the Appendix~\ref{app:theory}.

\begin{theorem}[Finite-frequency ambiguity]
\label{thm:cf_ambiguity}
For any finite set of characteristic-function frequencies with positive weights, there exists a non-Gaussian distribution $P$ with zero mean and identity covariance that is indistinguishable from a standard Gaussian under the corresponding finite-frequency objective, while
$\mathcal{L}_{\mathrm{WEMReg}}(P)>0.$
\end{theorem}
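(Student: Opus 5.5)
We need to construct a non-Gaussian $P$ in $\mathbb{R}^d$ with zero mean and identity covariance whose characteristic function agrees with the Gaussian one at every frequency used by the finite-frequency objective. The objective is presumably of the form $\sum_{k} w_k\,\E_{\vu}\bigl|\varphi_{P_\vu}(t_k)-e^{-t_k^2/2}\bigr|^2$ over finitely many frequencies $t_1,\dots,t_K$ (with positive weights $w_k$ and possibly finitely many directions). The plan is to make the projected characteristic functions match $e^{-t^2/2}$ \emph{exactly} at all $\pm t_k$, for every direction $\vu$. That forces the objective to vanish regardless of the weights. Since $W_2$ is a metric and $P_\vu\neq\mathcal{N}(0,1)$ on a set of directions of positive measure, we would then get $\mathcal{L}_{\mathrm{WEMReg}}(P)>0$.

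\textbf{Construction.} Start from the standard Gaussian density $\gamma$ on $\mathbb{R}^d$ and perturb it: $p=\gamma+\varepsilon h$, where $h$ is a real, smooth, rapidly decaying function. We need $\hat h$ to vanish on a set that contains every point $t_k\vu$ for all $\vu\in\mathbb{S}^{d-1}$ and all $k$, i.e.\ on the spheres $\{\|\omega\|=|t_k|\}$, and also to vanish to second order at $\omega=0$. We choose $\hat h(\omega)=\psi(\|\omega\|)$ with $\psi$ a smooth, even, real, compactly supported bump. Its support is placed in an annulus $a<\|\omega\|<b$ that avoids $0$ and every $|t_k|$; such an annulus exists because there are finitely many $|t_k|$, and we may take $0<a$ below the smallest nonzero $|t_k|$.

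Because $\hat h$ is radial, real, and even, $h$ is real and Schwartz. Because $\hat h$ vanishes near $0$, we get $\int h=0$, and the first and second moments of $h$ vanish, since they are derivatives of $\hat h$ at $0$. Therefore $p$ integrates to one and has zero mean and identity covariance. For each $\vu$, the projected characteristic function is $\varphi_{P_\vu}(t)=e^{-t^2/2}+\varepsilon\,\hat h(t\vu)$, which equals $e^{-t^2/2}$ at every $t_k$. So the finite-frequency objective is exactly zero.

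\textbf{Main obstacle: nonnegativity.} The perturbed $p$ must be a genuine density, and $h$ decays only faster than any polynomial, not necessarily faster than $\gamma$. The approach I would try first is to replace the additive perturbation with a multiplicative one, $p=\gamma\,(1+\varepsilon g)$ with $g$ bounded. Then $p\ge 0$ whenever $\varepsilon\|g\|_\infty<1$.

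The constraint becomes that $\widehat{\gamma g}=\hat\gamma * \hat g$ must vanish at the frequencies and to second order at $0$. A convenient way to arrange this is to take $h=\gamma_{1/2}\cdot\cos$-type functions, or more cleanly $h=\gamma\cdot q$. Here $q(x)=\sum_j c_j\cos(\langle\xi_j,x\rangle)$ is a finite trigonometric sum, so that $\hat h$ is a finite sum of Gaussians centered at $\pm\xi_j$. The finitely many linear conditions (vanishing at the sampled frequencies, plus the moment conditions) can then be met by choosing more free coefficients $c_j$ than constraints, provided only finitely many directions are used by the objective.

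If the objective integrates over all directions, I would fall back on the radial construction. There I would ensure domination by taking $p=\gamma+\varepsilon h$ with $h$ chosen as the inverse transform of $\psi*$(a wide Gaussian), approximately preserving the zeros. Alternatively, I would use the known fact that a radial $\hat h$ of compact support gives $|h(x)|\lesssim e^{-c\|x\|}$, combined with a truncation argument.

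\textbf{Conclusion.} With $h\not\equiv0$, some projection differs from $\mathcal{N}(0,1)$. By continuity in $\vu$ this holds on an open set of directions, so the expectation of $W_2^2$ is positive.
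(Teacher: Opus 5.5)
There is a genuine gap, and it sits in the case the theorem is actually about. The paper's finite-frequency objective $\mathcal{L}_{\mathrm{CF}}$ averages over all directions $\vu\sim\operatorname{Unif}(\mathbb{S}^{d-1})$. So you need $\varphi_{P,\vu}(t_j)=e^{-t_j^2/2}$ for every unit $\vu$, i.e.\ the perturbation's Fourier transform must vanish on entire spheres $\{\|\omega\|=|t_j|\}$. For your trigonometric-sum perturbation $\gamma q$ this is an infinite family of linear conditions, so your ``more coefficients than constraints'' count no longer applies, as you anticipate.

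That leaves the additive radial construction $p=\gamma+\varepsilon h$ with $\hat h$ supported in an annulus, and it cannot be rescued for any $\varepsilon>0$. Since $\hat h\equiv0$ on a neighborhood of the origin, \emph{all} moments of $h$ vanish, not just the first two. A nonnegative $\gamma+\varepsilon h$ would therefore be a probability density with every moment of $\gamma_d$. The Gaussian is determined by its moments, which forces $h=0$.

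Your proposed repairs do not get around this. Convolving $\psi$ with a Gaussian, or truncating $h$, moves the zeros at $|t_j|$ and at $0$. ``Approximately'' preserving them is not enough, because the theorem needs $\mathcal{L}_{\mathrm{CF}}(P)=0$ exactly. The claimed decay $|h(\vx)|\lesssim e^{-c\|\vx\|}$ is false for nonzero compactly supported $\hat h$: exponential decay would make $\hat h$ extend holomorphically to a tube, so it could not vanish on an open subset of the real frequencies without vanishing identically. Even if that decay held, it would not be dominated by $\gamma$.

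The missing idea is to combine two ingredients you already have. Keep the perturbation multiplicative and bounded, $dP=(1+\varepsilon r)\,d\gamma_d$, but make $r$ \emph{radial} instead of a trigonometric sum. Rotation invariance of $P$ then:
\begin{itemize}
\item makes $\varphi_{P,\vu}(t)$ real and independent of $\vu$;
\item gives zero mean automatically;
\item forces the covariance to be $d^{-1}\E_P\|\vz\|_2^2\,\mI_d$.
\end{itemize}
The all-direction requirement thus collapses to $J+2$ scalar linear conditions (your $K$ frequencies): $\int r\,d\gamma_d=0$, $\int\|\vx\|_2^2\,r\,d\gamma_d=0$, and $\int\cos(t_jx_1)\,r\,d\gamma_d=0$ for $j=1,\dots,J$. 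Taking $r$ in the span of the indicators of $J+3$ disjoint spherical shells gives a nonzero bounded solution by linear algebra, and $\varepsilon\|r\|_\infty<1$ gives positivity. This is precisely the paper's proof.

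Your closing step then works: a nonzero perturbation makes some projection non-Gaussian by Cram\'er--Wold, so $\mathcal{L}_{\mathrm{WEMReg}}(P)>0$. Rotation invariance even makes $W_2^2(P_\vu,\gamma_1)$ constant in $\vu$, so no continuity argument is needed.
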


Thus, finite-frequency matching can miss non-Gaussian structure that remains visible to projected Wasserstein matching.

\subsection{Preserving relational geometry}
\label{sec:method-ood}
\label{sec:theory_complementarity}

Marginal calibration controls the overall latent distribution, but it does not determine which states should remain close or far apart. We therefore introduce a relational objective that transfers state-to-state geometry from an informative encoder representation to the planning latent.

For frame $i$, let $\bar{\vp}_i=K^{-1}\sum_{k=1}^{K}\vp_{i,k}$ denote the mean of its $K$ final-layer patch tokens. As shown in \Cref{sec:intro}, novelty measured in this representation is substantially more predictive of downstream planning failure than novelty measured in the LeWM planning latent. We therefore use the mean-pooled patch representation as a relational anchor.
For a minibatch of $N$ samples, let $\mZ=[\vz_i^\top]_{i=1}^{N}$ denote the planning latents and $\mP=[\bar{\vp}_i^\top]_{i=1}^{N}$ the corresponding patch representations. For either representation $\mX$, we first standardize each coordinate by its batch standard deviation and compute pairwise Euclidean distances. We then divide all pairwise distances by their batch mean, removing differences in overall scale. Denoting the resulting normalized distance between samples $i$ and $j$ by $d_{\mX}(i,j)$, we define
\begin{equation}
\mathcal{L}_{\mathrm{OOD}}(\mZ,\mP)
=
\frac{1}{N^2}
\sum_{i,j=1}^{N}
\left(
d_{\mZ}(i,j)-d_{\mP}(i,j)
\right)^2.
\label{eq:nov}
\end{equation}
When computing this loss, $\mP$ is treated as stop-gradient. Thus, the patch representation provides a fixed target for each update, while still evolving over training through the shared encoder.

Intuitively, $\mathcal{L}_{\mathrm{OOD}}$ does not force the planning latent to reproduce the patch features themselves. Instead, it preserves their relative geometry: states that are far apart in the anchor representation are encouraged to remain far apart in the planning latent, and similarly for nearby states \citep{Tung_2019_ICCV, Peng_2019_ICCV, park2019relational}.
WEMReg and relational preservation therefore constrain different aspects of the representation.

\begin{proposition}[Non-redundant constraints]
\label{prop:nonredundancy}
For a fixed anchor, two representations can have the same WEMReg loss but different relational losses, or the same relational loss but different WEMReg values.
\end{proposition}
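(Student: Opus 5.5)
We prove each half of \Cref{prop:nonredundancy} with an explicit construction, exploiting symmetries that one loss sees and the other does not. Throughout, fix a minibatch anchor $\mP$ whose normalized distances $d_{\mP}(i,j)$ are not all equal. For example, take $N$ distinct points in general position, so that after per-coordinate standardization and mean-normalization some pairs are closer than others. We compare candidate latent batches $\mZ$ through their empirical distribution $\hat P_{\mZ}=N^{-1}\sum_i\delta_{\vz_i}$. By \eqref{eq:theory_population_sot}, $\mathcal{L}_{\mathrm{WEMReg}}(\hat P_{\mZ})$ depends only on this empirical measure, equivalently on the multiset of sorted projections along each direction $\vu$.

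\emph{Same WEMReg, different relational loss.} WEMReg is invariant under reindexing of the batch, since a permutation $\sigma$ of rows leaves $\hat P_{\mZ}$ unchanged. The relational loss, however, pairs row $i$ of $\mZ$ with row $i$ of the fixed anchor $\mP$. Start from a $\mZ$ that reproduces the anchor geometry exactly, for instance $\mZ=\mP$ (or a linear image of it when dimensions differ). This gives $\mathcal{L}_{\mathrm{OOD}}(\mZ,\mP)=0$, because the coordinatewise standardization and mean-rescaling are applied identically to both. Now set $\mZ^\sigma=[\vz_{\sigma(i)}^\top]$. Per-coordinate batch standard deviations are permutation invariant, so $d_{\mZ^\sigma}(i,j)=d_{\mZ}(\sigma(i),\sigma(j))=d_{\mP}(\sigma(i),\sigma(j))$. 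Hence
\[
\mathcal{L}_{\mathrm{OOD}}(\mZ^\sigma,\mP)=\frac{1}{N^2}\sum_{i,j}\bigl(d_{\mP}(\sigma(i),\sigma(j))-d_{\mP}(i,j)\bigr)^2 .
\]
This quantity is positive as soon as $\sigma$ fails to be an isometry of the anchor distance matrix. Such a $\sigma$ exists because the distances are not all equal: choose a transposition that moves a point involved in a strictly shortest pair but not in any pair of the same length. Concretely, with $N=3$ collinear anchor points at $0,1,3$, swapping the first two gives positive loss.

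\emph{Same relational loss, different WEMReg.} The normalized distance $d_{\mX}$ is invariant under translations and under per-coordinate rescalings of $\mX$, because coordinatewise standardization divides by batch standard deviations and mean-normalization removes global scale. Take any $\mZ$ and form $\mZ'=\mZ+\vone\va^\top$ with $\va\neq\vzero$. Then $\mathcal{L}_{\mathrm{OOD}}(\mZ',\mP)=\mathcal{L}_{\mathrm{OOD}}(\mZ,\mP)$. WEMReg changes, though: each projection is shifted by $\vu^\top\va$. The one-dimensional identity $W_2^2(Q,\mathcal{N}(0,1))=W_2^2(Q_0,\mathcal{N}(0,1))+m^2$ holds for $Q$ with mean $m$, where $Q_0$ is $Q$ recentred. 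For a batch with zero empirical mean this gives
\[
\mathcal{L}_{\mathrm{WEMReg}}(\hat P_{\mZ'})=\mathcal{L}_{\mathrm{WEMReg}}(\hat P_{\mZ})+\E_{\vu}(\vu^\top\va)^2=\mathcal{L}_{\mathrm{WEMReg}}(\hat P_{\mZ})+\|\va\|^2/d .
\]
The gap is strictly positive. An alternative is the global scaling $\mZ'=c\mZ$ with $c\neq1$; this leaves $d_{\mZ}$ fixed while changing projected second moments. The translation version is cleaner, because the mean-decomposition of $W_2^2$ is exact.

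The main obstacle is the first half. One must verify that the chosen permutation genuinely changes the normalized distances after the batch-dependent standardization. This holds because standardization and the mean distance are permutation invariant, so normalization commutes with reindexing. One must also exhibit a concrete non-isometric $\sigma$, which the three-point example settles. The rest is bookkeeping. If the paper intends population rather than batch quantities, the same constructions apply with $\hat P_{\mZ}$ replaced by the joint law of $(\vz,\bar{\vp})$, with permutation replaced by a measure-preserving rematching of the coupling.
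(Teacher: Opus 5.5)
Your proof is correct. The first half is the paper's argument: the paper also permutes rows, concretely with $N=3$, $d=m=1$, anchor $\mP=(-b,0,b)^\top$ placed at the Gaussian cell means of Lemma~\ref{lem:quantile_identity}, and $\mZ^{\mathrm p}=(-b,b,0)^\top$. Both then have WEMReg value $\kappa_3$, with relational losses $0$ versus $9/16$.

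The second half takes a different route. The paper scales, $\mZ^{(c)}=c\mP$, and uses Lemma~\ref{lem:quantile_identity} to get $\kappa_3+(c-1)^2v_3$. You translate and use the mean-splitting identity $W_2^2(Q,\gamma_1)=W_2^2(Q_0,\gamma_1)+m^2$, which gives a gap of $\|\va\|_2^2/d$.

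Your route is more general. Translation leaves every difference $\vx_i-\vx_j$ and every $\boldsymbol\sigma_{\mX}$ unchanged. So it works for any anchor, any dimension, and any $\varepsilon_0\geq0$ in the appendix's standardization $\mS_{\mX}=\operatorname{diag}(\boldsymbol\sigma_{\mX}+\varepsilon_0\vone)$. It also needs no quantile-cell computation; centering $\mZ$ first costs nothing, since it does not change $\mathcal L_{\mathrm{OOD}}$.

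The paper's route buys two things. Its values are explicit at the WEMReg-optimal one-dimensional configuration, so it shows that even a batch attaining the minimal value $\kappa_3$ can carry nonzero relational error. And the scaling direction ties directly to the scale-mismatch term $\eta$ in Theorem~\ref{thm:planning_stability}.

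One caution about your side remarks. You claim $d_{\mX}$ is invariant under per-coordinate rescalings, and that global scaling $c\mZ$ leaves $d_{\mZ}$ fixed. Both hold exactly only when $\varepsilon_0=0$ or in one dimension, because $\sigma_j+\varepsilon_0$ does not scale with $\sigma_j$. This is precisely why the paper confines its scaling example to $d=m=1$. The same caveat applies to using ``a linear image'' of $\mP$ in your first half; just take $d=m$ and $\mZ=\mP$.

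Your general justification for the existence of a non-isometric transposition is muddled; the $0,1,3$ example does settle it. A cleaner general argument: if the off-diagonal normalized distances are not all equal, chaining gives pairs $\{a,b\},\{b,c\}$ with $d_{\mP}(a,b)\neq d_{\mP}(b,c)$. The transposition of $a$ and $c$ then makes the $(a,b)$ term of the loss nonzero.
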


Thus, neither objective determines the other: $\mathcal{L}_{\mathrm{WEMReg}}$ constrains the global latent distribution, whereas $\mathcal{L}_{\mathrm{OOD}}$ constrains normalized state-to-state geometry relative to the encoder anchor.

The complete ATLAS objective is
$
\mathcal{L}
=
\mathcal{L}_{\mathrm{pred}}
+
\lambda\mathcal{L}_{\mathrm{WEMReg}}
+
\mu\mathcal{L}_{\mathrm{OOD}},
\label{eq:total}$
where $\mathcal{L}_{\mathrm{pred}}$ is the latent prediction loss and $\lambda,\mu>0$ control the two representation objectives. The three terms respectively target accurate dynamics prediction, marginal calibration, and preservation of relational geometry.

\subsection{Finite-candidate planning stability}
\label{sec:theory_planning}

At test time, ATLAS uses the same goal-conditioned latent planner as LeWM. Given a goal image $o_g$, CEM searches over candidate action sequences and rolls each candidate forward for $T$ prediction steps\citep{de2005tutorial}. Candidates are scored by the distance between their predicted terminal latent $\widehat{\vz}_T$ and the encoded goal $\vz_g$,
\begin{equation}
|\widehat{\vz}_T-\vz_g|_2^2,
\qquad
\vz_g=E_\theta(o_g).
\end{equation}
CEM repeatedly refines its sampling distribution toward low-cost candidates, executes a prefix of the selected sequence, and replans from the next observation.

We now ask when this finite-candidate selection remains stable. Three factors can perturb the planner: inaccurate terminal predictions, distortion of state-to-state geometry, and mismatch in latent scale. Let $\epsilon_{\mathrm{rel}}$ denote the maximum discrepancy between normalized pairwise distances in the planning latent and the patch-anchor representation, and let $e$ denote the maximum terminal prediction error over the candidate set. We additionally use $\eta$ to quantify latent-scale mismatch. Precise definitions of these quantities and the fixed normalization constants are given in Appendix~\ref{app:theory_details}.

\begin{theorem}[Planning stability]
\label{thm:planning_stability}
For a common reference bank, the mean $k$-nearest-neighbor novelty scores $s$ computed in the planning and anchor representations satisfy
\begin{equation}
|s_{\mZ}(q)-s_{\mP}(q)|
\leq
\epsilon_{\mathrm{rel}}.
\label{eq:theory_knn_stability}
\end{equation}
Moreover, for a fixed finite candidate set and bounded latent-scale mismatch, any candidate selected using predicted latent distance has anchor-geometry regret bounded by
\begin{equation}
2\left[
\frac{e}{\ell}
+
(1+\eta)\epsilon_{\mathrm{rel}}
+
\eta M
\right],
\label{eq:theory_planning_regret}
\end{equation}
where $\ell$ is the latent distance scale and $M$ bounds the candidate-to-goal distance in the anchor geometry.
\end{theorem}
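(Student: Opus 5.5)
The plan is to fix the definitions the statement leaves to Appendix~\ref{app:theory_details} and then argue in two parts, both resting on a sup-norm comparison of normalized distances. For a query $q$ and a common reference bank $\mathcal{B}$, write $d_{\mZ}(q,b)$ and $d_{\mP}(q,b)$ for the normalized distances in the planning latent and the anchor, computed with fixed normalization constants. Set $\epsilon_{\mathrm{rel}}=\sup_{x,y}|d_{\mZ}(x,y)-d_{\mP}(x,y)|$, where the supremum runs over all pairs that enter either the novelty score or the planning cost. The score $s_{\mX}(q)$ is the mean of the $k$ smallest values of $d_{\mX}(q,b)$ over $b\in\mathcal{B}$.

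\textbf{Novelty bound.} I would use the fact that the $j$-th order statistic is $1$-Lipschitz in the sup norm. If $|f(b)-g(b)|\le\epsilon$ for all $b$, then the $j$-th smallest value of $f$ and the $j$-th smallest value of $g$ differ by at most $\epsilon$. To see this, note that at least $j$ points satisfy $g\le f_{(j)}+\epsilon$, so $g_{(j)}\le f_{(j)}+\epsilon$; the other direction is symmetric. Averaging over $j=1,\dots,k$ then gives $|s_{\mZ}(q)-s_{\mP}(q)|\le\epsilon_{\mathrm{rel}}$. Equivalently, the $k$-smallest mean is a min over $k$-subsets of an average, and a min of averages is $1$-Lipschitz in the sup norm.

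\textbf{Regret bound.} Let the candidates be $c\in\mathcal{C}$ with true terminal latents $\vz_T^c$, predictions $\widehat{\vz}_T^c$, and prediction errors $\|\widehat{\vz}_T^c-\vz_T^c\|\le e$.
\begin{itemize}
\item Take the planner cost as $J(c)=\|\widehat{\vz}_T^c-\vz_g\|/\ell$ and the anchor cost as $A(c)=d_{\mP}(c,g)\le M$. Here $\ell$ is the raw latent distance scale, and I allow the normalization used in $d_{\mZ}$ to be a scale $\ell'$ with $|\ell/\ell'-1|\le\eta$ (this is my reading of ``bounded latent-scale mismatch'').
\item By the triangle inequality, $|J(c)-\|\vz_T^c-\vz_g\|/\ell|\le e/\ell$.
\item Rescaling gives $\|\vz_T^c-\vz_g\|/\ell=(\ell'/\ell)\,d_{\mZ}(c,g)$. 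Write this as $d_{\mZ}(c,g)$ plus a scale term, and bound the scale term by $\eta\,d_{\mZ}(c,g)\le\eta(M+\epsilon_{\mathrm{rel}})$.
\item Adding the pieces, $|J(c)-A(c)|\le e/\ell+(1+\eta)\epsilon_{\mathrm{rel}}+\eta M=:\Delta$ uniformly over the finite set.
\end{itemize}
Then apply the standard argmin perturbation argument. If $\hat c$ minimizes $J$ and $c^\star$ minimizes $A$, then
\[
A(\hat c)\le J(\hat c)+\Delta\le J(c^\star)+\Delta\le A(c^\star)+2\Delta,
\]
which is exactly bound~(\ref{eq:theory_planning_regret}).

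\textbf{Main obstacle.} The hard step is the scale bookkeeping that produces the precise constants $(1+\eta)$ on $\epsilon_{\mathrm{rel}}$ and $\eta M$. This depends on how the appendix defines $\eta$, for instance as $|\ell'/\ell-1|$ or its reciprocal. I would choose the convention $|\ell'/\ell-1|\le\eta$, since then $(\ell'/\ell)d_{\mZ}=d_{\mP}+(d_{\mZ}-d_{\mP})+(\ell'/\ell-1)d_{\mZ}$ yields exactly those terms. A second point to check is that batch-dependent standardization is frozen into fixed constants, so that $d_{\mZ}$ is a genuine fixed metric over candidates, the goal and the bank. Otherwise the sup-norm argument would fail.
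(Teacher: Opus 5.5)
Your overall architecture matches the paper's: a uniform sup-norm comparison of costs, followed by the finite-argmin perturbation lemma. The $k$NN part is handled as a minimum over $k$-subsets, and your order-statistic variant is equivalent. The step that does not go through under the paper's definitions is the scale bookkeeping.

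In the appendix, $d_{\mZ}(i,g)=\|\mS_{\mZ}^{-1}(\vz_i-\vz_g)\|_2/c_{\mZ}$ uses the \emph{coordinatewise} standardization $\mS_{\mZ}=\operatorname{diag}(\boldsymbol\sigma_{\mZ}+\varepsilon_0\vone)$. The other constants are $\ell=(1+\varepsilon_0)c_{\mZ}$ and $\eta=\max_j|(\boldsymbol\sigma_{\mZ})_j-1|/(1+\varepsilon_0)$. Raw and normalized distances are therefore not related by a single scalar. Your identity $\|\vz_T^c-\vz_g\|/\ell=(\ell'/\ell)\,d_{\mZ}(c,g)$ holds only when all coordinate standard deviations coincide, and your convention $|\ell'/\ell-1|\le\eta$ is exactly the isotropic special case of the paper's $\eta$. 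Your first bullet also states the reciprocal convention $|\ell/\ell'-1|\le\eta$. That version would put $\eta/(1-\eta)$ rather than $\eta$ in front of $d_{\mZ}$, so only the later convention gives the stated constants.

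The missing ingredient is to replace the exact scalar factorization by an inequality. Set $\vv_i=\mS_{\mZ}^{-1}(\vz_i-\vz_g)$ and $a=1+\varepsilon_0$. Then $\|\vz_i-\vz_g\|_2/\ell=\|(\mS_{\mZ}/a)\vv_i\|_2/c_{\mZ}$ and $d_{\mZ}(i,g)=\|\vv_i\|_2/c_{\mZ}$, so the reverse triangle inequality gives
\[
\left|\frac{\|\vz_i-\vz_g\|_2}{\ell}-d_{\mZ}(i,g)\right|
\le\frac{\|(\mS_{\mZ}/a-\mI_d)\vv_i\|_2}{c_{\mZ}}
\le\eta\,d_{\mZ}(i,g).
\]
The last step holds because $\mS_{\mZ}/a-\mI_d$ is diagonal with entries $((\boldsymbol\sigma_{\mZ})_j-1)/a$, so its operator norm is exactly $\eta$.

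With this in place of your rescaling step, the rest of your argument goes through verbatim and yields exactly the stated constants. That rest is the triangle inequality for the $e/\ell$ term, the bound $d_{\mZ}(i,g)\le M+\epsilon_{\mathrm{rel}}$, and the $2\Delta$ argmin bound. For the last of these, note that minimizing the planner's squared distance is equivalent to minimizing $J$, since $\ell>0$.
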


The theorem connects the ATLAS objectives directly to planning stability. The prediction loss reduces terminal rollout error $e$, the relational objective reduces geometry distortion $\epsilon_{\mathrm{rel}}$, and WEMReg controls latent-scale mismatch $\eta$. Together, these terms bound how far the action selected using predicted latent distances can deviate from the action preferred under the anchor geometry. The $k$NN result additionally shows that reducing relational distortion preserves the novelty structure that motivates our relational objective.
Full assumptions, definitions, and proofs are provided in Appendix~\ref{app:theory_details}.

\section{Experiments}
\label{sec:experiments}

\textbf{Datasets.}
\label{sec:exp-data}
We evaluate on three offline control tasks spanning navigation and
manipulation, with different observation structures and dynamics.
\textbf{TwoRoom}~\citep{swm} is a 2D navigation task in which an
agent must move between two rooms through a narrow opening to reach a
target position. \textbf{PushT}~\citep{florence2022ibc,chi2023diffusion}
is a contact-rich 2D manipulation task that requires pushing and
rotating a T-shaped block into a target pose.
\textbf{OGBench-Cube}~\citep{ogbench} extends the evaluation to 3D
robotic manipulation, where a robot arm moves a cube toward a target
configuration. Together, the tasks probe latent planning across
navigation, contact-rich planar manipulation, and visually richer
robotic manipulation. Each model is trained on the corresponding
offline demonstration data; dataset details are given in
\Cref{app:setup}.

\textbf{Baselines.}
\label{sec:exp-baselines}
We compare ATLAS with four latent world models representing different
representation-learning and regularization choices.
\textbf{LeWM}~\citep{lewm} is the closest controlled comparison:
ATLAS uses the same general JEPA-style encoder--predictor setup while
replacing the marginal regularizer and adding relational
preservation. \textbf{DINO-WM}~\citep{zhou2025dinowm} represents a
different representation-learning regime, learning latent dynamics on
top of a frozen DINOv2-S/14 encoder pretrained on large-scale external
image data. Following the comparison protocol of \citet{lewm}, we
remove proprioceptive inputs and retrain its trainable components on
each task for $10$ epochs. \textbf{FAST-WM}~\citep{fastlewm} is an
efficiency-oriented LeWM variant and is evaluated from its released
checkpoints with the original encoder and predictor.
\textbf{PLDM}~\citep{pldm} provides an alternative JEPA-style
training objective based on VICReg-derived regularization; we retrain
it on the same task data for the same $10$-epoch budget.
The comparison to LeWM therefore isolates the effect of the ATLAS
training objective most directly, whereas the remaining baselines
provide broader reference points with different encoders, objectives,
or checkpoint provenance.

\textbf{Settings.}
\label{sec:exp-setting}
We evaluate all methods under a common goal-conditioned planning
protocol, using the same task environments, planning objective, and
success criterion. Each WM is trained offline and used at
test time to score candidate action sequences through the terminal
latent goal-matching objective in \Cref{sec:method}. We use
receding-horizon CEM with $300$ candidates, $30$ refinement
iterations, and $30$ elites. ATLAS, LeWM, and DINO-WM are evaluated
on identical sampled episodes within each evaluation condition;
FAST-WM and PLDM use their released or saved episode draws from the
same task distributions and are evaluated with the same protocol and
episode-partitioning rule.

Planning success is defined as the fraction of episodes in which the
simulator reaches the target configuration. Unless stated otherwise,
the goal is selected $50$ steps ahead along the demonstration
trajectory. We evaluate $200$ episodes for each of five evaluation
seeds ($42$--$46$) and report the mean and standard deviation across
these seeds. For ATLAS, we use $\lambda{=}3.0$ for WEMReg and
$\mu{=}0.1$ for OOD-recovery across all tasks. Complete training,
evaluation, and planning hyperparameters are provided in
\Cref{app:setup}.
\subsection{Main results}
\label{sec:exp-main}

\Cref{fig:main} reports planning success at goal offset $50$. For each task and evaluation seed, we split the $200$ episodes into lower- and higher-novelty halves based on their distance from the training-state support. Specifically, each episode receives a novelty score given by the mean $k$-nearest-neighbor distance between states along the expert trajectory and a fixed bank of training states, and episodes are split at the median. We refer to the two subsets as in-distribution (ID) and OOD, respectively. Full details and exact values are provided in \Cref{app:idood,tab:main-idood} in the Appendix.

\begin{figure*}[t]
\centering
\vspace{-0.2inch}
\includegraphics[width=\textwidth]{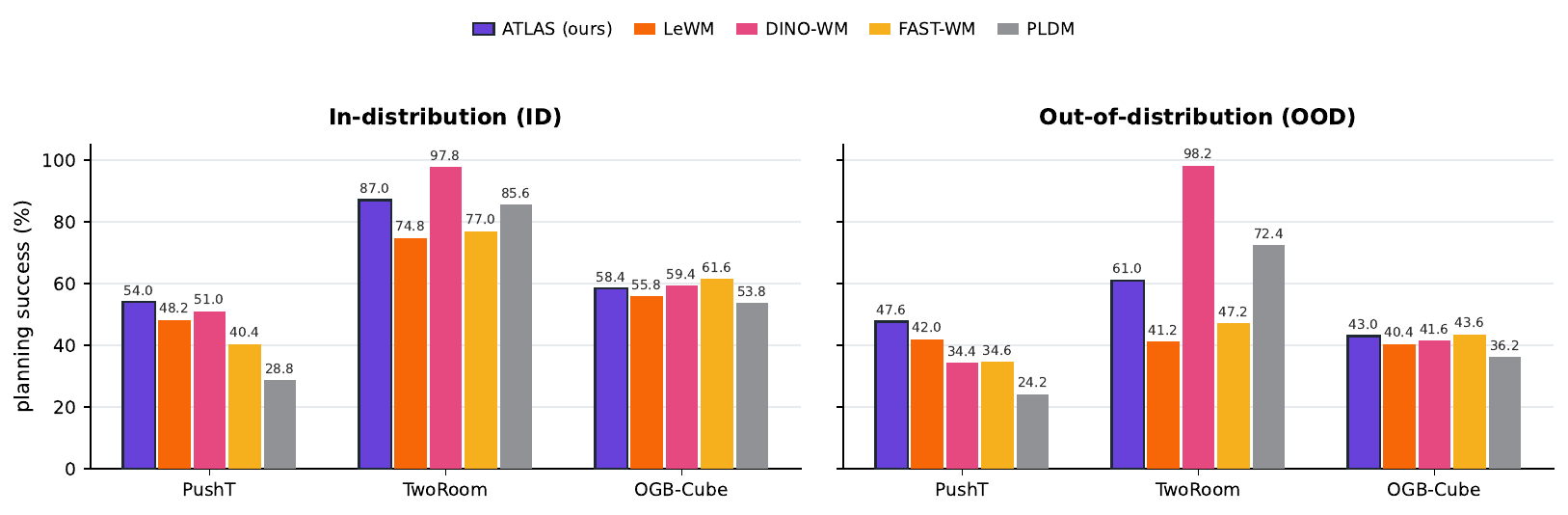}
\caption{\textbf{Planning success at goal offset $50$ on lower- and higher-novelty evaluation subsets.}
We refer to the two median-split subsets as ID and OOD, respectively. Bars show mean success over five evaluation seeds. The split is determined by a model-independent expert-path novelty score based on distance to training states. Exact values are reported in \Cref{tab:main-idood}.}
\label{fig:main}
\end{figure*}

Across all three tasks, ATLAS improves mean planning success over LeWM on both ID and OOD episodes. The gains are modest on PushT and OGBench-Cube and substantially larger on TwoRoom, where ATLAS improves from $74.8\%$ to $87.0\%$ on ID episodes and from $41.2\%$ to $61.0\%$ on OOD episodes. The largest improvement therefore occurs on higher-novelty TwoRoom episodes. Importantly, the gains are not restricted to OOD states: ATLAS also improves over LeWM on the lower-novelty subset in every task.
Performance relative to the broader baseline set is task dependent. ATLAS achieves the highest mean success on both novelty subsets in PushT, while several methods are similar on OGBench-Cube. On TwoRoom, DINO-WM performs substantially better, but it uses a frozen DINOv2-S/14 encoder pretrained on roughly $142$M external images, whereas ATLAS is trained from task data without external visual pretraining\citep{oquab2024dinov2learningrobustvisual}. We therefore treat DINO-WM as a reference for the benefit of a strong pretrained representation rather than as a controlled objective-level comparison. Other baselines, including PLDM and FAST-WM, are competitive on particular tasks, underscoring that the most direct comparison for isolating the effect of ATLAS is with LeWM and the controlled ablations in \Cref{sec:ablation}.

\subsection{Ablation study}
\label{sec:ablation}
We study the roles of marginal calibration and relational preservation with a $2\times2$ ablation over
{SIGReg, WEMReg} $\times$ {OOD-recovery off, on}. All four variants use the same architecture, data, training budget, and planner, and differ only in these objective terms. LeWM corresponds to SIGReg without OOD-recovery, while ATLAS uses WEMReg together with OOD-recovery. \Cref{tab:ablation} reports planning success at goal offset $50$.

The two objectives affect planning differently across tasks. On TwoRoom, either WEMReg or OOD-recovery alone produces a large improvement over LeWM, while the full ATLAS objective performs similarly to the stronger single-component variant. On PushT, OOD-recovery provides the larger individual gain, and the full model performs best. On OGBench-Cube, OOD-recovery again improves over LeWM, whereas WEMReg alone does not; the full objective recovers the strongest mean performance.
Overall, ATLAS attains the highest mean success on all three tasks, although its gain over the stronger single-component variant is small. These results suggest that WEMReg and OOD-recovery constrain different properties of the representation, but their downstream benefits need not be additive and can overlap in the representation bottlenecks they address. This is consistent with \Cref{prop:nonredundancy}, which establishes that neither objective determines the other without implying additive improvements in planning performance.

\begin{table}[t]
\caption{$2\times2$ ablation at goal offset $50$ over marginal calibration
(SIGReg vs.\ WEMReg) and relational preservation (OOD-recovery off vs.\ on).
Planning success rate (\%, mean $\pm$ std over 5 evaluation seeds);
highest mean per task in bold.}
\label{tab:ablation}
\begin{center}
\small
\setlength{\tabcolsep}{6pt}
\begin{tabular}{lccc}
\toprule
\textbf{Arm} & \textbf{PushT} & \textbf{TwoRoom} & \textbf{OGBench-Cube} \\
\midrule
LeWM (base)
& $45.10 \pm 3.65$ & $58.00 \pm 3.36$ & $48.10 \pm 2.94$ \\
ATLAS w/o WEMReg
& $50.30 \pm 3.85$ & $71.80 \pm 2.99$ & $50.50 \pm 2.55$ \\
ATLAS w/o OOD-recovery
& $47.00 \pm 2.00$ & $73.90 \pm 2.46$ & $46.90 \pm 2.60$ \\
ATLAS
& $\mathbf{50.80 \pm 0.68}$ & $\mathbf{74.00 \pm 1.92}$ & $\mathbf{50.70 \pm 2.69}$ \\
\bottomrule
\end{tabular}
\end{center}
\end{table}

\section{Analysis of latent geometry and planning reliability}
\label{sec:analysis}

The planner in \Cref{sec:method} selects actions by comparing predicted terminal latents with an encoded goal. Its behavior therefore depends on three properties highlighted by \Cref{thm:planning_stability}: preservation of state-to-state geometry, calibration of the latent distribution, and accuracy of multi-step latent prediction. We examine each property below. These analyses are intended as mechanism-oriented diagnostics rather than causal explanations of planning success.

\textbf{ATLAS preserves more OOD-related structure in the planning latent.}
We first ask whether the failure-associated novelty signal identified in \Cref{sec:intro} remains detectable in the representation used for planning. Using the frozen-model diagnostic from \Cref{app:diagnostic}, we compute a $k$-NN novelty score at different representation depths and measure how well it predicts downstream planning failure. In LeWM, this signal weakens substantially from the patch representation to the final planning latent. Under ATLAS, the planning latent retains much more of this signal, reaching $0.75$ compared with $0.77$ in the patch representation. Similar improvements are observed on TwoRoom and OGBench-Cube (\Cref{fig:ood-recover} in Appendix).

This behavior is consistent with the relational objective in \Cref{sec:method-ood}, which transfers normalized pairwise geometry from the patch representation to the planning latent without directly supervising OOD labels or novelty scores. Direct measurements of pairwise relational distortion across controlled variants are provided in \Cref{fig:reme}.
On PushT, where LeWM shows the largest patch-to-latent degradation, OOD-recovery also produces the larger single-component gain in the ablation study.

\begin{wrapfigure}{r}{0.47\textwidth}
\centering
\vspace{-0.8em}
\includegraphics[width=0.45\textwidth]{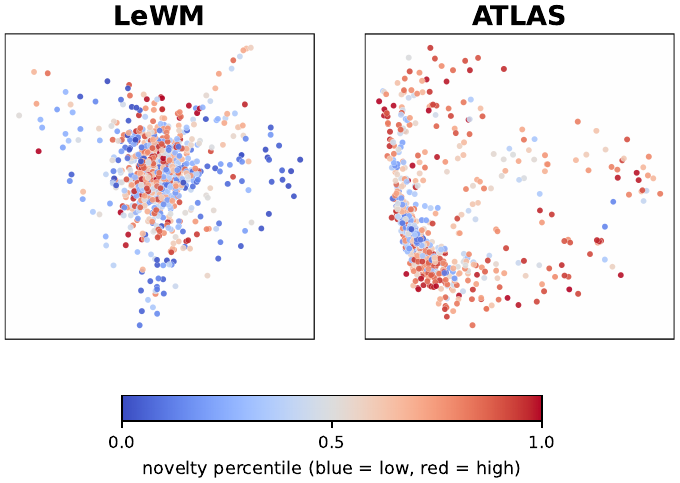}
\caption{\textbf{Planning-latent geometry (PushT).}
Two-dimensional PCA projections of $z$. Each point is a state, colored by its model-independent
true-state novelty percentile.}
\label{fig:ood-scatter}
\vspace{-1.0em}
\end{wrapfigure}

The same pattern is visible qualitatively in \Cref{fig:ood-scatter}. States are colored by model-independent true-state novelty after projecting the planning latent to two dimensions using PCA. On PushT, ATLAS shows the clearest structure: the in-distribution (blue, low-novelty) states sit close to one another, while the out-of-distribution (red, high-novelty) states lie farther from the rest, so distance in the latent tracks novelty; under LeWM this distance separation is much weaker. On TwoRoom and OGBench-Cube the effect is less pronounced, though the ATLAS projections still place high-novelty states farther from the in-distribution cluster than LeWM does (\Cref{fig:ood-scatter-appendix} in Appendix). Because these PCA plots are only two-dimensional projections, we use them as qualitative support for the stronger quantitative $k$-NN diagnostic rather than as a direct measure of relational preservation.

\begin{figure*}[b]
\centering
\vspace{-0.4inch}
\includegraphics[width=\textwidth]{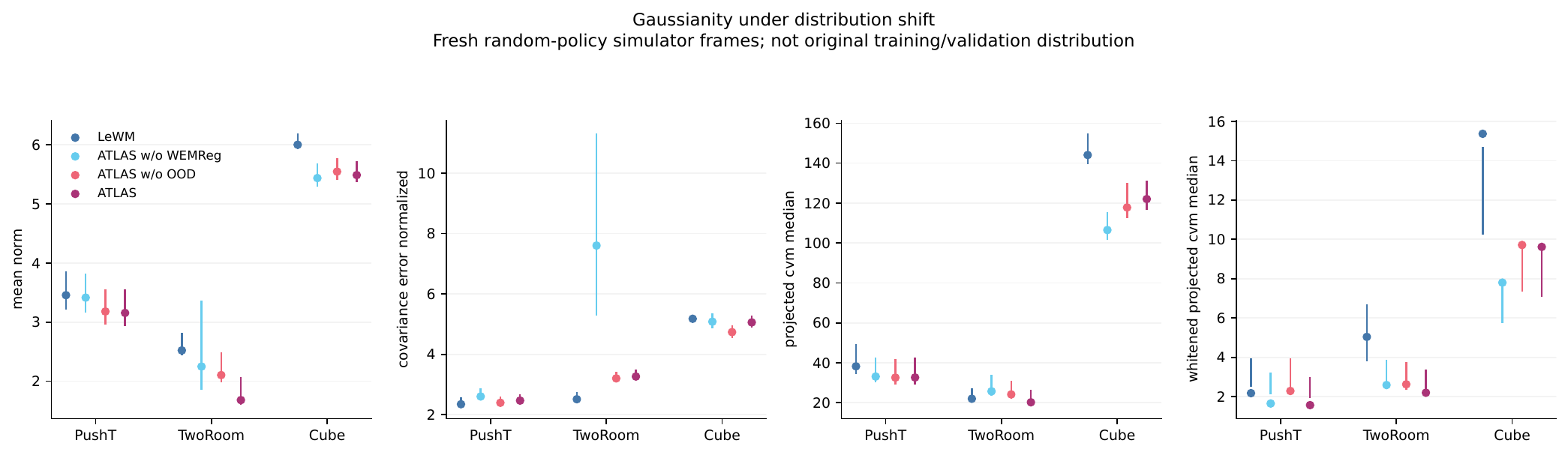}
\caption{\textbf{Latent-marginal diagnostics under distribution shift.}
Planning-latent diagnostics on fresh random-policy simulator frames.
From left to right, the panels report mean norm, normalized covariance
error, median projected CvM discrepancy, and median whitened projected
CvM discrepancy.}
\label{fig:oodshift}
\end{figure*}

\textbf{WEMReg improves latent-marginal calibration.}
Relational preservation and marginal calibration act on different properties of the representation. The former preserves normalized pairwise structure, while the latter controls the global scale and shape of the latent distribution. In \Cref{thm:planning_stability}, this distinction appears through the scale-mismatch term $\eta$.
We evaluate marginal calibration on fresh random-policy simulator frames, separate from the ID/OOD split used for planning evaluation. \Cref{fig:oodshift} reports mean deviation, covariance error, projected distributional discrepancy, and projected discrepancy after whitening. Across the evaluated tasks, variants using WEMReg improve several of these diagnostics, with especially clear reductions in covariance and projected-distribution errors on TwoRoom. The effect is not uniform across every task and statistic, but overall the results indicate improved latent-marginal calibration under WEMReg.

\textbf{ATLAS has lower multi-step latent prediction error.}
The third factor in \Cref{thm:planning_stability} is terminal prediction error. Because CEM evaluates candidate actions after repeatedly applying the learned dynamics, small one-step errors can accumulate over the rollout horizon. We therefore measure normalized latent prediction error as the horizon increases on PushT and TwoRoom (\Cref{fig:rollout}). ATLAS has lower error than LeWM at every measured horizon, with the gap generally increasing over longer rollouts. This shows that the full ATLAS model provides more accurate multi-step latent predictions on these shared-query evaluations.

\begin{wrapfigure}{r}{0.6\textwidth}
\centering
\vspace{-0.8em}
\includegraphics[width=\linewidth]{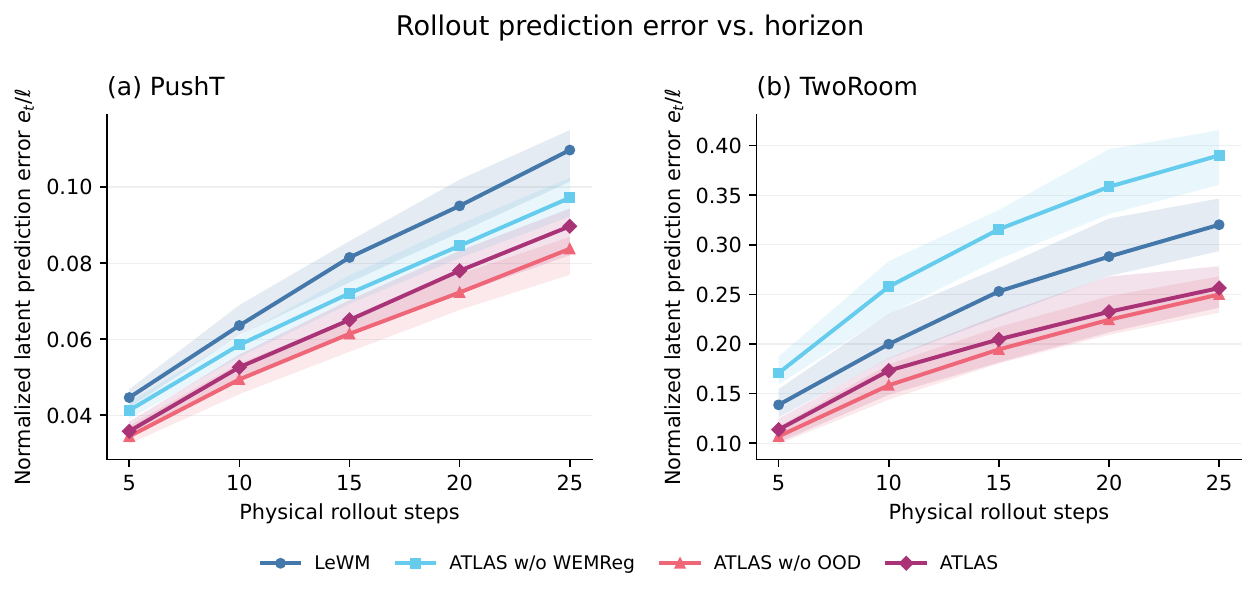}
\caption{\textbf{Multi-step rollout prediction error.}
Results on PushT (a) and TwoRoom (b) use $100$ shared queries and $128$
shared candidates per query. Lines show the across-query median of
candidate-median latent prediction errors, normalized by the fixed
checkpoint-specific reference-bank scale $\ell$. Shading denotes
$95\%$ percentile intervals from $1{,}000$ paired query-bootstrap
resamples. One prediction step spans five physical steps.}
\label{fig:rollout}
\vspace{-0.7em}
\end{wrapfigure}

Taken together, these diagnostics align with the three factors identified by the planning-stability analysis: OOD-recovery reduces relational distortion, WEMReg improves latent marginal calibration, and ATLAS exhibits lower multi-step prediction error.
Additional diagnostics of the planning-stability analysis, including the bound-versus-regret comparison and the decomposition of the prediction, relational, and scale terms in Theorem~\ref{thm:planning_stability}, are provided in Figures~\ref{fig:regret} and Figure~\ref{fig:theo3}.

\textbf{Closed-loop trajectories illustrate the resulting behavior.}
\Cref{fig:cases} shows a selected TwoRoom episode in which ATLAS succeeds while LeWM fails. ATLAS passes through the connecting opening and reaches the goal region, whereas LeWM remains on the opposite side of the wall. Analogous PushT and OGBench-Cube examples are shown in \Cref{fig:cases-appendix}. These trajectories are qualitative illustrations of the behavior underlying the aggregate success rates rather than representative estimates of typical behavior.

\begin{figure}[h]
\centering
\includegraphics[width=\linewidth]{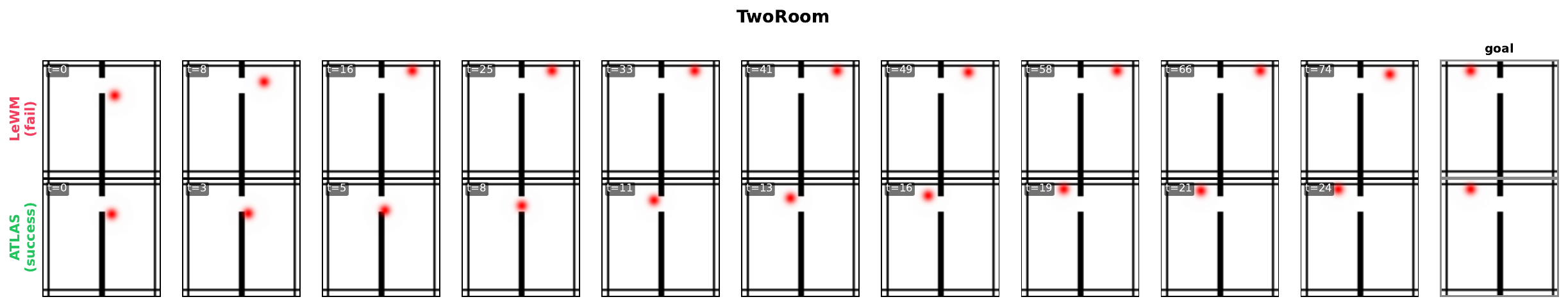}
\caption{\textbf{Selected closed-loop planning trajectory on TwoRoom.}
One episode in which LeWM fails and ATLAS succeeds (LeWM above ATLAS;
frame indices indicate elapsed steps, goal in the rightmost column). The
PushT and OGBench-Cube cases are shown in \Cref{fig:cases-appendix} in Appendix.}
\label{fig:cases}
\end{figure}

\section{Conclusion}
\label{sec:conclusion}

We introduced ATLAS, a training objective for latent WMs that jointly targets relational preservation and marginal calibration. OOD-recovery preserves normalized state-to-state geometry relative to an internal patch anchor, while WEMReg calibrates the global latent distribution. Our analysis further connects these representation properties to finite-candidate planning through relational distortion, latent-scale mismatch, and terminal prediction error.
Across PushT, TwoRoom, and OGBench-Cube, ATLAS improves mean goal-reaching success over LeWM on both lower- and higher-novelty evaluation subsets. Ablations show that relational preservation and marginal calibration provide distinct but partially overlapping benefits, while representation and rollout diagnostics show stronger novelty-related structure in the planning latent, improved marginal calibration, and lower multi-step prediction error. Together, these results support preserving planning-relevant latent geometry as a useful principle for improving the reliability of latent WM planning.

% \section*{AI Use Statement}
% Large language model based coding assistants were used by the authors during this
% project, in line with the ICLR policy on the use of large language models. Their use was
% limited to software engineering and presentation support: assisting with data-processing
% and plotting code, refactoring evaluation scripts, drafting and typesetting figures and
% tables, and drafting and copy-editing portions of the manuscript text. They were not used to generate research
% ideas, design the method, or produce experimental results. All experiments were run and
% verified by the authors, and the authors checked every reported number, claim, and figure
% against the underlying data and take full responsibility for the content of the paper.

\bibliography{iclr2027_conference}
\bibliographystyle{iclr2027_conference}

\appendix
% after \appendix, section/subsection labels are appendices -> cleveref says "Appendix X"
\crefalias{section}{appendix}
\crefalias{subsection}{appendix}
\section{Experimental setup details}
\label{app:setup}

\Cref{tab:hparams} lists the training, model, and planning hyperparameters. All world models use
the same architecture, the arms differ only in the anti-collapse
regularizer (SIGReg vs.\ WEMReg) and the OOD-recovery weight $\mu$. Values are shared across the
three tasks unless noted.

\begin{table}[h]
\caption{Training, model, and planning hyperparameters.}
\label{tab:hparams}
\begin{center}
\small
\begin{tabular}{lll}
\toprule
\bf Group & \bf Hyperparameter & \bf Value \\
\midrule
\multirow{5}{*}{Model}
 & Latent / embedding dimension & $192$ \\
 & Predictor & 6-layer causal Transformer \\
 & Attention heads / MLP dimension & $16$ / $2048$ \\
 & History length & $3$ frames \\
 & Frameskip & $5$ \\
\addlinespace
\multirow{6}{*}{Training}
 & Epoch budget (all tasks) & $10$ \\
 & Batch size & $128$ \\
 & Optimizer & AdamW \\
 & Learning rate (cosine schedule) & $5\times10^{-5}$ \\
 & Weight decay & $1\times10^{-3}$ \\
 & Training seed & $3072$ \\
\addlinespace
\multirow{4}{*}{Objective}
 & WEMReg weight $\lambda$ & $3.0$ \\
 & WEMReg projections & $1024$ \\
 & OOD-recovery weight $\mu$ & $0.1$ (ours); $0$ (LeWM) \\
 & Prediction loss & next-step latent MSE \\
\addlinespace
\multirow{6}{*}{Planning (CEM)}
 & Candidates / iterations / elites & $300$ / $30$ / $30$ \\
 & Sampling variance scale & $1.0$ \\
 & Planning horizon & $5$ \\
 & Receding horizon (replan every) & $5$ \\
 & Evaluation budget & $\text{offset}+25$ steps \\
 & Episodes $\times$ seeds & $200 \times 5$ (seeds $42$--$46$) \\
\bottomrule
\end{tabular}
\end{center}
\end{table}

\paragraph{Datasets.} All models are trained on the offline demonstration datasets shipped with the
\textsc{stable-worldmodel} harness: \textbf{TwoRoom} and \textbf{PushT} (2D navigation and
manipulation) and \textbf{OGBench-Cube} (3D robot-arm manipulation, from OGBench~\citep{ogbench}).
Each dataset is split $90/10$ into training and validation clips, with the validation set used only
to monitor training. Evaluation goals are sampled from the full set of valid episodes,
not from a held-out test split, and every method is evaluated on $200$ episodes per
seed at goal offset $50$.

\subsection{In-distribution / out-of-distribution split}
\label{app:idood}
We split each evaluation condition (task, seed) into in-distribution (ID) and out-of-distribution
(OOD) halves by how far each episode's expert path lies from the training distribution, using a
fixed, model-independent rule. We draw a bank of $50{,}000$ training states, standardize each
coordinate by its bank mean and standard deviation, and fit a $k$-nearest-neighbor index with
$k{=}50$. For each of the $200$ episodes we average, over the states of its expert path (from the
start row through the goal), the distance to the $k$ nearest bank states, giving one OOD score per
episode; episodes above the per-condition median are labeled OOD and the rest ID ($100$ each). The
bank, standardization, and neighbor count are fixed and use no model, so the labels depend only on
the states.

\begin{table}[h]
\caption{Main results at goal offset $50$, split into in-distribution (ID) and out-of-distribution
(OOD) episodes (\%, mean $\pm$ std over 5 seeds). Best per column in bold; exact data of
\Cref{fig:main}.}
\label{tab:main-idood}
\begin{center}
\setlength{\tabcolsep}{4pt}
\resizebox{\textwidth}{!}{%
\begin{tabular}{lcccccc}
\toprule
 & \multicolumn{2}{c}{\bf PushT} & \multicolumn{2}{c}{\bf TwoRoom} & \multicolumn{2}{c}{\bf OGBench-Cube}\\
\cmidrule(lr){2-3}\cmidrule(lr){4-5}\cmidrule(lr){6-7}
\bf Method & \bf ID & \bf OOD & \bf ID & \bf OOD & \bf ID & \bf OOD \\
\midrule
 ATLAS (ours) & $\mathbf{54.00 \pm 3.63}$ & $\mathbf{47.60 \pm 2.58}$ & $87.00 \pm 2.28$ & $61.00 \pm 2.19$ & $58.40 \pm 3.72$ & $43.00 \pm 4.94$ \\
 LeWM & $48.20 \pm 4.62$ & $42.00 \pm 5.76$ & $74.80 \pm 2.79$ & $41.20 \pm 4.35$ & $55.80 \pm 4.31$ & $40.40 \pm 4.22$ \\
 DINO-WM & $51.00 \pm 5.93$ & $34.40 \pm 4.67$ & $\mathbf{97.80 \pm 1.17}$ & $\mathbf{98.20 \pm 0.98}$ & $59.40 \pm 4.50$ & $41.60 \pm 7.28$ \\
 FAST-WM & $40.40 \pm 5.08$ & $34.60 \pm 4.96$ & $77.00 \pm 4.00$ & $47.20 \pm 3.87$ & $\mathbf{61.60 \pm 4.63}$ & $\mathbf{43.60 \pm 5.71}$ \\
 PLDM & $28.80 \pm 4.87$ & $24.20 \pm 5.19$ & $85.60 \pm 3.72$ & $72.40 \pm 1.85$ & $53.80 \pm 4.62$ & $36.20 \pm 6.97$ \\
\bottomrule
\end{tabular}}
\end{center}
\end{table}

\section{OOD diagnostic: protocol and full results}
\label{app:diagnostic}

This appendix details the probe used in \Cref{sec:intro} (and reused in
\Cref{sec:analysis}) and reports its per-representation results at goal offset $50$.

\paragraph{Representations probed.}
For a frozen model we score, for the same input frame, a set of representations drawn
from the encoder at increasing depth: the \texttt{[CLS]} token at intermediate ViT
blocks (\emph{L3, L6, L9}), the mean-pooled final-layer patch tokens (\emph{patch}), the
final \texttt{[CLS]} token before the projector (\emph{cls}), and the projected
\texttt{[CLS]} token that SIGReg regularizes and that the planner uses, i.e.\ the
planning latent $z$. As a model-independent reference we also score an \emph{oracle}
representation built from the true environment state.

\paragraph{OOD score.}
Each representation is scored by a $k$-NN distance to a bank of $N{=}1.5\text{k}$ training
frames encoded once and standardized ($k{=}50$). For a query frame the score is the mean
distance to its $k$ nearest bank neighbors; a large distance means the frame is far from
training support, i.e.\ OOD. The oracle uses the same $k$-NN construction on the raw
true-state.

\paragraph{Metric: does the OOD score predict planning failure?}
We run the \emph{unchanged} official planner (latent CEM) on $n{=}200$ evaluation
episodes and log, at every step, each representation's OOD score; planning itself is
untouched, only the logged signal differs. Per episode we aggregate the score over its
last three steps and label the episode by whether the real simulator marks it a failure,
and summarize each representation by $\mathrm{AUROC}(\text{OOD score}\rightarrow
\text{failure})$ (chance $=0.5$). The true-state oracle, computed from the ground-truth
state, measures the \emph{true} degree to which a model's failures are OOD-driven. Our aim
is therefore not to raise a representation's AUROC as high as possible, but to make the
planning latent's AUROC \emph{agree} with the oracle's: a latent scoring far below the
oracle is blind to OOD structure its failures actually carry, whereas one that matches the
oracle reflects that structure as faithfully as the true state does.

\paragraph{Results.}
\Cref{tab:probe-full} reports this AUROC for every representation on the three tasks, at
goal offset $50$, for the baseline LeWM model, and \Cref{tab:probe-atlas} reports
the same for ATLAS; both run on the same episode set under the same planner as our main
results ($n{=}200$ episodes). For LeWM the OOD signal is present in the mid/late ViT features
and the patch tokens (often $0.6$--$0.9$) but drops sharply at the SIGReg-regularized
planning latent $z$, to $0.44$ on PushT (below chance) and to $0.55$/$0.49$ on TwoRoom/Cube,
in every case well below the patch anchor and the true-state oracle. This is the washout that
the OOD-recovery term of \Cref{sec:method} repairs, as \Cref{sec:analysis} shows: under ATLAS
$z$ rises to $0.75$/$0.64$/$0.69$, matching its own patch anchor and oracle.

\begin{table}[h]
\caption{OOD diagnostic at goal offset $50$ ($n{=}200$ episodes, baseline LeWM). $\mathrm{AUROC}(\text{OOD score}\rightarrow\text{planning failure})$ per
representation (shallow $\rightarrow$ deep, ending at the planning latent $z$). ``oracle'' is
the true-state reference. Chance $=0.5$.}
\label{tab:probe-full}
\begin{center}
\small
\begin{tabular}{lcccccc|c}
\bf Task & \bf L3 & \bf L6 & \bf L9 & \bf patch & \bf cls & \bf $z$ (SIGReg) & \bf oracle\\
\hline \\[-1.8ex]
PushT        & 0.59 & 0.81 & 0.83 & 0.76 & 0.72 & 0.44 & 0.88 \\
TwoRoom      & 0.69 & 0.70 & 0.60 & 0.79 & 0.71 & 0.55 & 0.69 \\
OGBench-Cube & 0.82 & 0.86 & 0.85 & 0.71 & 0.79 & 0.49 & 0.76 \\
\end{tabular}
\end{center}
\end{table}

\begin{table}[h]
\caption{OOD diagnostic at goal offset $50$ for \textbf{ATLAS} ($n{=}200$ episodes, same
episode set and planner as \Cref{tab:probe-full}).
$\mathrm{AUROC}(\text{OOD score}\rightarrow\text{planning failure})$ per representation.}
\label{tab:probe-atlas}
\begin{center}
\small
\begin{tabular}{lcccccc|c}
\bf Task & \bf L3 & \bf L6 & \bf L9 & \bf patch & \bf cls & \bf $z$ & \bf oracle\\
\hline \\[-1.8ex]
PushT        & 0.55 & 0.63 & 0.74 & 0.77 & 0.77 & 0.75 & 0.82 \\
TwoRoom      & 0.30 & 0.38 & 0.61 & 0.65 & 0.69 & 0.64 & 0.59 \\
OGBench-Cube & 0.82 & 0.89 & 0.88 & 0.69 & 0.84 & 0.69 & 0.70 \\
\end{tabular}
\end{center}
\end{table}

\begin{figure}[h]
\centering
\includegraphics[width=0.62\textwidth]{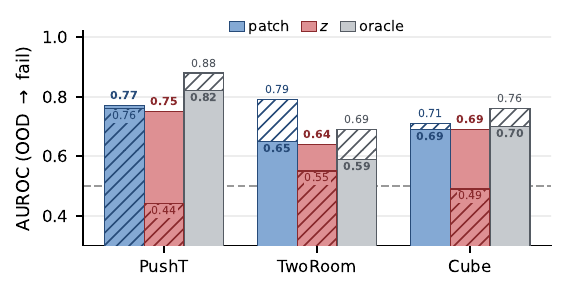}
\caption{\textbf{OOD-related diagnostic signal across representations.}
AUROC for predicting planning failure from a $k$-NN novelty score at goal offset $50$.
Hatched bars denote LeWM and solid bars ATLAS. Scores are computed from mean-pooled patch
features, the planning latent $z$, and true environment states (oracle reference). This is
a visualization of the per-representation values in \Cref{tab:probe-full,tab:probe-atlas}.}
\label{fig:ood-recover}
\end{figure}

\section{Proofs for the Theoretical Analysis}
\label{app:theory}

\subsection{Definitions and Setup}
\label{app:theory_details}

\paragraph{Distributional objectives.}
Let $\mathcal P_2(\R^d)$ denote distributions with finite second moments,
$\gamma_d=\mathcal N(\vzero,\mI_d)$, and $P_{\vu}$ the distribution of
$\vu^\top\rvz$ for $\rvz\sim P$ and $\vu\in\mathbb S^{d-1}$.
Here $W_2$ is the quadratic Wasserstein distance, and WEMReg is the
exact directional expectation in Eq.~\eqref{eq:theory_population_sot}.
For $\mZ=[\vz_1^\top;\ldots;\vz_N^\top]$, $N\geq2$, set
$P_{\mZ}=N^{-1}\sum_i\delta_{\vz_i}$ and
$\mathcal L_{\mathrm{WEMReg}}(\mZ)=\mathcal L_{\mathrm{WEMReg}}(P_{\mZ})$.
Its exact finite-batch decomposition is given in
Lemma~\ref{lem:quantile_identity}.
For finitely many frequencies $t_j\in\R$ and weights $\omega_j>0$, define
\begin{equation}
\begin{aligned}
\mathcal L_{\mathrm{CF}}(P)
&=\E_{\rvu\sim\operatorname{Unif}(\mathbb S^{d-1})}
  \sum_{j=1}^J\omega_j
  \left|\varphi_{P,\rvu}(t_j)-e^{-t_j^2/2}\right|^2,\\
\varphi_{P,\vu}(t)&=\E_{\rvz\sim P}[e^{\mathrm i t\vu^\top\rvz}].
\end{aligned}
\label{eq:theory_cf}
\end{equation}
This includes finite-frequency Epps--Pulley quadrature up to a positive
normalization factor \citep{balestriero2025lejepa,lewm}.
Theorem~\ref{thm:cf_ambiguity} concerns this finite-frequency objective,
not integration over all frequencies.

\paragraph{Relational geometry.}
Let $\bar{\vp}_i\in\R^m$ be the mean-pooled patch-token anchors and
$\mP=[\bar{\vp}_1^\top;\ldots;\bar{\vp}_N^\top]$, held fixed when
differentiating the relational loss.
For $\mX=[\vx_1^\top;\ldots;\vx_N^\top]$, let
$\vmu_{\mX}$ and $\boldsymbol\sigma_{\mX}$ be the empirical mean and
coordinate-wise standard deviations, with
$\vmu_{\mX}=N^{-1}\sum_i\vx_i$ and
$(\boldsymbol\sigma_{\mX})_j^2
=N^{-1}\sum_i((\vx_i)_j-(\vmu_{\mX})_j)^2$.
For $\varepsilon_0\geq0$, set
\begin{equation}
\begin{aligned}
\mS_{\mX}&=\operatorname{diag}(\boldsymbol\sigma_{\mX}+\varepsilon_0\vone),\\
c_{\mX}&=\frac1{N^2}\sum_{i,j}\|\mS_{\mX}^{-1}(\vx_i-\vx_j)\|_2,
\qquad
d_{\mX}(i,j)=\frac{\|\mS_{\mX}^{-1}(\vx_i-\vx_j)\|_2}{c_{\mX}}.
\end{aligned}
\label{eq:theory_relational_metric}
\end{equation}
Here $\vone$ is the all-ones vector. Assume positive diagonal entries
of $\mS_{\mX}$ and $c_{\mX}>0$.
The relational loss is
\begin{equation}
\mathcal L_{\mathrm{OOD}}(\mZ,\mP)
=\frac1{N^2}\sum_{i,j}
  \bigl(d_{\mZ}(i,j)-d_{\mP}(i,j)\bigr)^2.
\label{eq:theory_relational_loss}
\end{equation}

\paragraph{Evaluation and planning.}
Fix a finite pool $\sI=\{1,\ldots,N\}$ and normalizers shared across all
queries and candidates, computed on this pool or imported from a fixed
reference bank. Define
\begin{equation}
\epsilon_{\mathrm{rel}}
=\max_{i,j\in\sI}|d_{\mZ}(i,j)-d_{\mP}(i,j)|.
\label{eq:theory_uniform_rel}
\end{equation}
When the relational loss uses this same pool and these same normalizers,
the maximum squared error is bounded by the sum, giving
\begin{equation}
\epsilon_{\mathrm{rel}}
\leq N\sqrt{\mathcal L_{\mathrm{OOD}}(\mZ,\mP)}.
\label{eq:theory_mse_to_uniform}
\end{equation}
For $q\in\sI$, a nonempty bank $\sB\subseteq\sI$, and
$1\leq k\leq|\sB|$, the mean $k$NN score is
\begin{equation}
s_{\mX}(q)=\min_{\substack{\sJ\subseteq\sB\\|\sJ|=k}}
\frac1k\sum_{j\in\sJ}d_{\mX}(q,j),
\label{eq:theory_knn}
\end{equation}
which is independent of tie breaking.
For a nonempty candidate set $\sC\subseteq\sI$ and goal $g\in\sI$, let
$\vz_i$ and $\widehat{\vz}_i$ be the true and predicted terminal
embeddings, respectively, and set
\begin{equation}
\begin{aligned}
e&=\max_{i\in\sC}\|\widehat{\vz}_i-\vz_i\|_2,
&M&=\max_{i\in\sC}d_{\mP}(i,g),\\
\eta&=\frac{\max_j|(\boldsymbol\sigma_{\mZ})_j-1|}{1+\varepsilon_0},
&\ell&=(1+\varepsilon_0)c_{\mZ}.
\end{aligned}
\label{eq:theory_planning_constants}
\end{equation}

\paragraph{Scope.}
The guarantees concern the evaluated pool and finite candidate set;
small training losses alone do not control errors on unseen states.
The objectives need not change $e$, $\epsilon_{\mathrm{rel}}$, and $\eta$
independently or monotonically, and exact Gaussian matching and anchor
geometry preservation need not be simultaneously feasible.
Task-cost guarantees additionally require anchor alignment
(Corollary~\ref{cor:task_cost}).

\subsection{Exact Finite-Sample Sliced Transport}
\label{app:quantile_identity}

\begin{lemma}[Gaussian quantile-cell decomposition]
\label{lem:quantile_identity}
Let $N\geq2$, $x_{(1)}\leq\cdots\leq x_{(N)}$, and
$\widehat P=N^{-1}\sum_i\delta_{x_{(i)}}$.
With $\Phi$ the standard Gaussian cumulative distribution function, set
\begin{equation}
m_i=N\int_{(i-1)/N}^{i/N}\Phi^{-1}(p)\,dp,
\qquad v_N=\frac1N\sum_i m_i^2,
\qquad\kappa_N=1-v_N.
\label{eq:theory_cell_constants}
\end{equation}
Then
\begin{equation}
W_2^2(\widehat P,\gamma_1)
=\kappa_N+\frac1N\sum_i(x_{(i)}-m_i)^2.
\label{eq:theory_exact_sot}
\end{equation}
Moreover, $N^{-1}\sum_i m_i=0$, $0<v_N<1$, and $\kappa_N>0$.
\end{lemma}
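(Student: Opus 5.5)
We use the quantile representation of one-dimensional $W_2$. The empirical measure $\widehat P$ has quantile function $F_{\widehat P}^{-1}(p)=x_{(i)}$ for $p\in((i-1)/N,i/N]$. The Gaussian has quantile function $\Phi^{-1}$. Hence
\begin{equation*}
W_2^2(\widehat P,\gamma_1)=\int_0^1\bigl(F_{\widehat P}^{-1}(p)-\Phi^{-1}(p)\bigr)^2dp
=\sum_{i=1}^N\int_{(i-1)/N}^{i/N}\bigl(x_{(i)}-\Phi^{-1}(p)\bigr)^2dp .
\end{equation*}
This needs only the standard fact that in one dimension the monotone coupling is optimal for quadratic cost, which holds since both measures have finite second moments.

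On each cell $I_i=((i-1)/N,i/N]$ we expand the square around the cell mean $m_i$ of $\Phi^{-1}$ on $I_i$. Write $x_{(i)}-\Phi^{-1}(p)=(x_{(i)}-m_i)+(m_i-\Phi^{-1}(p))$. The cross term integrates to zero, because $\int_{I_i}(m_i-\Phi^{-1})\,dp=m_i/N-m_i/N=0$ by the definition of $m_i$. The first part contributes $N^{-1}(x_{(i)}-m_i)^2$. Summing the second part over $i$ gives
\begin{equation*}
\sum_i\int_{I_i}(\Phi^{-1}-m_i)^2dp=\int_0^1(\Phi^{-1})^2dp-\frac1N\sum_i m_i^2=1-v_N,
\end{equation*}
using $\int_0^1(\Phi^{-1}(p))^2dp=\E[Z^2]=1$ for $Z\sim\gamma_1$ and $\int_{I_i}\Phi^{-1}=m_i/N$. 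This yields \eqref{eq:theory_exact_sot} with $\kappa_N=1-v_N$.

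For the remaining claims:
\begin{itemize}
\item \textbf{Mean zero.} $N^{-1}\sum_i m_i=\int_0^1\Phi^{-1}(p)\,dp=\E Z=0$.
\item \textbf{$\kappa_N>0$.} $\kappa_N$ equals $\sum_i\int_{I_i}(\Phi^{-1}-m_i)^2$, a sum of within-cell variances. Each is strictly positive because $\Phi^{-1}$ is strictly increasing and hence not a.e.\ constant on any cell. So $\kappa_N>0$, i.e.\ $v_N<1$.
\item \textbf{$v_N>0$.} Since $N\geq2$, the first cell lies in $(0,1/2)$, where $\Phi^{-1}<0$. Hence $m_1<0$, and $v_N\geq m_1^2/N>0$.
\end{itemize}

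The only potentially delicate step is the justification of the quantile formula for $W_2$ with an atomic measure against a continuous one, together with integrability of $\Phi^{-1}$ near $0$ and $1$. Both are standard: the quantile coupling $(F_{\widehat P}^{-1}(U),\Phi^{-1}(U))$ with $U$ uniform is optimal in one dimension, and $\Phi^{-1}\in L^2(0,1)$. Every $m_i$ is finite for the same reason. Everything else is the one-line variance decomposition above.
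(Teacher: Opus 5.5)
Your proposal is correct and follows essentially the same route as the paper: the quantile representation of one-dimensional $W_2$, a within-cell variance decomposition around $m_i$ with a vanishing cross term, and $\kappa_N>0$ from $\Phi^{-1}$ being nonconstant on every cell. The only difference is cosmetic: you obtain $v_N>0$ from $m_1<0$, whereas the paper uses $m_N>0$, which is the same fact by symmetry.
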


\begin{proof}
Write $q=\Phi^{-1}$ and $I_i=((i-1)/N,i/N]$.
Since $\int_{I_i}(q(p)-m_i)\,dp=0$, the quantile representation gives
\[
\begin{aligned}
W_2^2(\widehat P,\gamma_1)
&=\sum_i\int_{I_i}(x_{(i)}-q(p))^2\,dp\\
&=\frac1N\sum_i(x_{(i)}-m_i)^2
  +\sum_i\int_{I_i}(q(p)-m_i)^2\,dp.
\end{aligned}
\]
The final sum equals $\int_0^1q(p)^2\,dp-v_N=1-v_N=\kappa_N$,
while $N^{-1}\sum_i m_i=\int_0^1q(p)\,dp=0$.
As $q$ is nonconstant on every cell, $\kappa_N>0$;
as $m_N>0$ for $N\geq2$, $v_N>0$. Thus $0<v_N<1$.
\end{proof}

Applying the lemma to sorted projections gives the exact batch WEMReg
cost, with non-optimizable residual $\kappa_N$.
This lower bound is attained by $x_{(i)}=m_i$ in one projection, but
need not be simultaneously attainable in all directions of a finite
high-dimensional point cloud.

\subsection{Proof of Finite-Frequency Ambiguity}
\label{app:cf_ambiguity}

\begin{proof}[Proof of Theorem~\ref{thm:cf_ambiguity}]
Choose $J+3$ disjoint spherical shells of positive $\gamma_d$-measure.
Their indicators span a $(J+3)$-dimensional space of bounded radial
functions. The following $J+2$ homogeneous linear constraints therefore
admit a nonzero solution $r$, normalized so that $\|r\|_\infty=1$:
\begin{equation}
\begin{aligned}
\int r\,d\gamma_d&=0,\qquad
\int\|\vx\|_2^2r(\vx)\,d\gamma_d(\vx)=0,\\
\int\cos\!\bigl(t_j(\vx)_1\bigr)r(\vx)\,d\gamma_d(\vx)&=0,
\qquad j=1,\ldots,J.
\end{aligned}
\label{eq:app_radial_constraints}
\end{equation}
Define $dP=(1+r/2)\,d\gamma_d$.
The first constraint makes $P$ a probability measure with
$1/2\leq dP/d\gamma_d\leq3/2$, hence finite second moments.
Also, $P\neq\gamma_d$ because $r$ is nonzero on a set of positive measure.
Orthogonal invariance and the second constraint give
\[
\E_P[\rvz]=\vzero,\qquad
\operatorname{Cov}_P(\rvz)
=\frac{\E_P\|\rvz\|_2^2}{d}\mI_d=\mI_d.
\]
Reflection symmetry eliminates the imaginary parts of the characteristic
functions. The remaining constraints and orthogonal invariance give
$\varphi_{P,\vu}(t_j)=e^{-t_j^2/2}$ for every unit $\vu$ and every $j$,
so $\mathcal L_{\mathrm{CF}}(P)=0$.

Finally, orthogonal invariance makes $W_2^2(P_{\vu},\gamma_1)$ independent
of $\vu$. If this value were zero, all unit projections would be standard
Gaussian, so $\E_P[e^{\mathrm i\vv^\top\rvz}]=e^{-\|\vv\|_2^2/2}$
for every $\vv\in\R^d$. Uniqueness of characteristic functions would
imply $P=\gamma_d$, a contradiction. Thus
$\mathcal L_{\mathrm{WEMReg}}(P)>0$.
\end{proof}

\subsection{WEMReg Controls the Normalization Mismatch}
\label{app:sot_calibration}

\begin{lemma}[Quantitative moment calibration]
\label{lem:sot_calibration}
Let $P\in\mathcal P_2(\R^d)$ have mean $\vmu$ and covariance $\mSigma$,
and set $s=\mathcal L_{\mathrm{WEMReg}}(P)$. Then
$\|\vmu\|_2^2\leq ds$ and
\begin{equation}
\|\mSigma-\mI_d\|_{\mathrm F}\leq C_d(s),
\label{eq:app_covariance_control}
\end{equation}
where $\|\cdot\|_{\mathrm F}$ is the Frobenius norm and
\begin{equation}
C_d(s)=\sqrt{\frac{d(d+2)}2}
\left[\sqrt d(1+\sqrt s)+1\right]\sqrt s.
\label{eq:app_calibration_constant}
\end{equation}
Consequently, $\sigma_j=\sqrt{\mSigma_{j,j}}$ satisfies
$\max_j|\sigma_j-1|\leq C_d(s)$. For the reference distribution used
in relational standardization, this gives
\begin{equation}
\eta\leq\frac{C_d(s)}{1+\varepsilon_0}.
\label{eq:app_eta_control}
\end{equation}
\end{lemma}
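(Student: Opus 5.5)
The argument goes direction by direction, and then averages the per-direction bounds over the sphere. The main tool is the quadratic lower bound on one-dimensional $W_2$ given by the first two moments. For $\vu\in\mathbb S^{d-1}$, the projection $P_\vu$ has mean $m_\vu=\vu^\top\vmu$ and variance $\sigma_\vu^2=\vu^\top\mSigma\vu$. For any coupling of $P_\vu$ with $\gamma_1$, expanding $\E(X-Y)^2$ and using Cauchy--Schwarz on the covariance term gives the Gelbrich-type bound
\[
W_2^2(P_\vu,\gamma_1)\ge m_\vu^2+(\sigma_\vu-1)^2 .
\]
Write $w(\vu)=W_2^2(P_\vu,\gamma_1)$, so that $s=\E_\vu w(\vu)$.

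\textbf{Mean bound.} From the display, $\E_\vu(\vu^\top\vmu)^2\le s$. For uniform $\vu$ on the sphere, $\E_\vu\vu\vu^\top=\mI_d/d$, so the left side equals $\|\vmu\|_2^2/d$. This gives $\|\vmu\|_2^2\le ds$ immediately.

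\textbf{Covariance bound.}
\begin{itemize}
\item From the display, $\E_\vu(\sigma_\vu-1)^2\le s$.
\item Convert this to a bound on the quadratic form $\vu^\top(\mSigma-\mI_d)\vu=\sigma_\vu^2-1=(\sigma_\vu-1)(\sigma_\vu+1)$.
\item Apply Cauchy--Schwarz:
\[
\E_\vu(\sigma_\vu^2-1)^2\le \bigl(\E_\vu(\sigma_\vu-1)^4\bigr)^{1/2}\bigl(\E_\vu(\sigma_\vu+1)^4\bigr)^{1/2}.
\]
\item A cleaner route is to bound $|\sigma_\vu-1|\le\sqrt{w(\vu)}$ pointwise and $\sigma_\vu+1\le 2+\sqrt{w(\vu)}$. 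This controls $\E|\sigma_\vu^2-1|$ and $\E(\sigma_\vu^2-1)^2$ in terms of $s$ together with a uniform bound on $\sigma_\vu$.
\item The intended constant $\sqrt d(1+\sqrt s)+1$ suggests a specific uniform bound. Since $w(\vu)\ge0$ and $\E w=s$, we have $w(\vu)\le \|\mSigma\|+\dots$; more naturally, $\operatorname{tr}\mSigma=d\,\E\sigma_\vu^2\le d(1+\sqrt s)^2$ by Minkowski. Hence $\max_\vu\sigma_\vu\le\sqrt{\operatorname{tr}\mSigma}\le\sqrt d(1+\sqrt s)$, and so $\sigma_\vu+1\le\sqrt d(1+\sqrt s)+1$.
\item Combining, $\E_\vu(\vu^\top A\vu)^2\le[\sqrt d(1+\sqrt s)+1]^2 s$ with $A=\mSigma-\mI_d$.
\end{itemize}

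\textbf{Transfer to Frobenius norm.} This uses the spherical fourth-moment identity: for symmetric $A$,
\[
\E_\vu(\vu^\top A\vu)^2=\frac{(\operatorname{tr}A)^2+2\|A\|_{\mathrm F}^2}{d(d+2)}\ge\frac{2\|A\|_{\mathrm F}^2}{d(d+2)}.
\]
Rearranging gives $\|A\|_{\mathrm F}\le\sqrt{d(d+2)/2}\,[\sqrt d(1+\sqrt s)+1]\sqrt s=C_d(s)$, which is exactly the stated constant. This confirms the intended route.

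\textbf{Consequences.} The diagonal entries satisfy $|\Sigma_{jj}-1|\le\|A\|_{\mathrm F}$, and
\[
|\sigma_j-1|=\frac{|\Sigma_{jj}-1|}{\sigma_j+1}\le|\Sigma_{jj}-1|,
\]
which gives $\max_j|\sigma_j-1|\le C_d(s)$. The bound on $\eta$ then follows by substituting into its definition in \eqref{eq:theory_planning_constants}. Here I take the "reference distribution" to be the empirical $P_{\mZ}$, whose coordinate standard deviations are exactly the $\boldsymbol\sigma_{\mZ}$ appearing in $\eta$.

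\textbf{Expected obstacle.} The Gelbrich lower bound and the spherical moment identity are standard. The step I expect to need the most care is obtaining the uniform bound on $\sigma_\vu+1$ from $s$ alone. The trace/Minkowski argument should give it, but it should be checked against the exact constant.
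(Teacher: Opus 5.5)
Your proposal is correct and follows essentially the same route as the paper: the per-direction Gelbrich bound $W_2^2(P_\vu,\gamma_1)\ge(\vu^\top\vmu)^2+(\sigma_\vu-1)^2$, averaging with $\E\vu\vu^\top=\mI_d/d$, the Minkowski bound $\operatorname{tr}\mSigma\le d(1+\sqrt s)^2$ combined with $\sigma_\vu\le\sqrt{\operatorname{tr}\mSigma}$, the spherical fourth-moment identity with the trace term dropped, and the diagonal-entry argument for $\sigma_j$ and $\eta$. You can delete the abandoned Cauchy--Schwarz and pointwise-$\sqrt{w(\vu)}$ bullets and the unfinished bound on $w(\vu)$, since the trace/Minkowski step alone already gives exactly $C_d(s)$.
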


\begin{proof}
Let $\rvu\sim\operatorname{Unif}(\mathbb S^{d-1})$ and
$q(\vu)=\sqrt{\vu^\top\mSigma\vu}$.
Centering any coupling and applying the reverse triangle inequality
in $L^2$ gives
\begin{equation}
W_2^2(P_{\vu},\gamma_1)
\geq(\vu^\top\vmu)^2+(q(\vu)-1)^2.
\label{eq:app_projected_moment_bound}
\end{equation}
Since $\E[\rvu\rvu^\top]=\mI_d/d$, averaging proves
$\|\vmu\|_2^2/d+\E(q(\rvu)-1)^2\leq s$. Hence
\begin{equation}
\sqrt{\frac{\operatorname{tr}(\mSigma)}d}
=\|q\|_{L^2}\leq1+\sqrt s.
\label{eq:app_trace_bound}
\end{equation}
For any symmetric $\mA$, the spherical fourth-moment identity is
\begin{equation}
\E(\rvu^\top\mA\rvu)^2
=\frac{\operatorname{tr}(\mA)^2+2\|\mA\|_{\mathrm F}^2}{d(d+2)}.
\label{eq:app_spherical_fourth}
\end{equation}
Taking $\mA=\mSigma-\mI_d$ and using
$q(\vu)\leq\sqrt{\operatorname{tr}(\mSigma)}$ yields
\begin{equation}
\begin{aligned}
\E(\rvu^\top\mA\rvu)^2
&=\E\bigl[(q(\rvu)-1)^2(q(\rvu)+1)^2\bigr]\\
&\leq\left[\sqrt d(1+\sqrt s)+1\right]^2s.
\end{aligned}
\label{eq:app_directional_covariance}
\end{equation}
Dropping the nonnegative trace term in
Eq.~\eqref{eq:app_spherical_fourth} proves the covariance bound.
Finally,
\[
|\sigma_j-1|
=\frac{|\mSigma_{j,j}-1|}{\sigma_j+1}
\leq|\mSigma_{j,j}-1|
\leq\|\mSigma-\mI_d\|_{\mathrm F},
\]
which also proves the bound on $\eta$.
\end{proof}

The lemma applies to population or empirical reference distributions.
Its $s$ is the exact directional expectation; substituting a finite
Monte Carlo estimate requires an additional integration-error bound.

\subsection{Proof of Non-Redundancy}
\label{app:nonredundancy}

\begin{proof}[Proof of Proposition~\ref{prop:nonredundancy}]
Take $N=3$ and $d=m=1$.
Gaussian symmetry gives $(m_1,m_2,m_3)=(-b,0,b)$ with $b>0$.
Set $\mP=(-b,0,b)^\top$, $\mZ^{\mathrm p}=(-b,b,0)^\top$, and
$\mZ^{(c)}=c\mP$ for $c>0$.
In one dimension, the positive standardization factor cancels, so
\[
d_{\mX}(i,j)=\frac{9|x_i-x_j|}{\sum_{a,b}|x_a-x_b|}.
\]
The distances for pairs $(1,2),(1,3),(2,3)$ are respectively
$\frac98(1,2,1)$ for $\mP$ and $\frac98(2,1,1)$ for
$\mZ^{\mathrm p}$. Together with their identical sorted values and
Lemma~\ref{lem:quantile_identity}, this gives
\begin{equation}
\begin{aligned}
\mathcal L_{\mathrm{WEMReg}}(\mP)
&=\mathcal L_{\mathrm{WEMReg}}(\mZ^{\mathrm p})=\kappa_3,\\
\mathcal L_{\mathrm{OOD}}(\mP,\mP)&=0,\qquad
\mathcal L_{\mathrm{OOD}}(\mZ^{\mathrm p},\mP)
=\frac49\left(\frac98\right)^2=\frac9{16}.
\end{aligned}
\label{eq:theory_permutation_example}
\end{equation}
The two one-dimensional projection directions have equal transport
costs by Gaussian symmetry. Positive scaling preserves all normalized
distances, while the sorted scaled values are $cm_i$. Thus
\begin{equation}
\begin{aligned}
\mathcal L_{\mathrm{OOD}}(\mZ^{(c)},\mP)&=0,\\
\mathcal L_{\mathrm{WEMReg}}(\mZ^{(c)})
&=\kappa_3+\frac13\sum_i(cm_i-m_i)^2
=\kappa_3+(c-1)^2v_3.
\end{aligned}
\label{eq:theory_scale_example}
\end{equation}
Taking $c\neq1$ proves the second claim. The construction is valid
for every $\varepsilon_0\geq0$ because all scalar standardization
denominators are positive.
\end{proof}

\subsection{Stability of Finite Minimization}
\label{app:finite_minimization}

\begin{lemma}[Uniform cost perturbations]
\label{lem:finite_minimum}
Let $\sC$ be nonempty and finite and let $f,g:\sC\to\R$ satisfy
$\max_{i\in\sC}|f(i)-g(i)|\leq\epsilon$ for $\epsilon\geq0$.
Then
\[
\left|\min_{\sC}f-\min_{\sC}g\right|\leq\epsilon,
\qquad
f(\widehat i)-\min_{\sC}f\leq2\epsilon
\quad(\widehat i\in\arg\min_{\sC}g).
\]
If $f$ has a unique minimizer $i^\star$ with
$f(i)-f(i^\star)>2\epsilon$ for every $i\neq i^\star$, then every
minimizer of $g$ equals $i^\star$.
\end{lemma}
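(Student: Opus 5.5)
The plan is a standard argument about perturbing a finite minimization. Let $i_f\in\arg\min_{\sC}f$ and $i_g\in\arg\min_{\sC}g$; both exist because $\sC$ is nonempty and finite.

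\emph{Minimum values.} First compare the two minimum values by evaluating each function at the other's minimizer. We have
\[
\min_{\sC}g\leq g(i_f)\leq f(i_f)+\epsilon=\min_{\sC}f+\epsilon .
\]
By the symmetric argument, $\min_{\sC}f\leq f(i_g)\leq g(i_g)+\epsilon=\min_{\sC}g+\epsilon$. The two inequalities together give $|\min_{\sC}f-\min_{\sC}g|\leq\epsilon$.

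\emph{Regret bound.} For any $\widehat i\in\arg\min_{\sC}g$, chain the uniform bound with the first part:
\[
f(\widehat i)\leq g(\widehat i)+\epsilon=\min_{\sC}g+\epsilon\leq\min_{\sC}f+2\epsilon .
\]
This is exactly $f(\widehat i)-\min_{\sC}f\leq2\epsilon$.

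\emph{Uniqueness under a margin.} Argue by contradiction. Suppose some minimizer $\widehat i$ of $g$ satisfies $\widehat i\neq i^\star$. The margin assumption gives $f(\widehat i)-f(i^\star)>2\epsilon$. Since $f(i^\star)=\min_{\sC}f$, this contradicts the regret bound just proved. Hence every minimizer of $g$ equals $i^\star$.

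No step is a genuine obstacle. The only care needed is to invoke finiteness so that both minima are attained. One should also note that the strict inequality in the margin hypothesis is what makes the contradiction go through when $\epsilon=0$ or when the bound is tight. In the planning theorem this lemma will be applied with $f=d_{\mP}(\cdot,g)$ and $g$ the normalized predicted latent distance; the uniform gap $\epsilon$ there is supplied by $e/\ell$, $\epsilon_{\mathrm{rel}}$, and $\eta$.
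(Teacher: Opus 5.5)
Your proof is correct and follows essentially the same route as the paper: you bound the minimum values by sandwiching $g$ between $f\pm\epsilon$, chain $f(\widehat i)\leq g(\widehat i)+\epsilon\leq\min_{\sC}f+2\epsilon$ for the regret, and use the strict margin to get uniqueness. The only cosmetic difference is that the paper compares $g(\widehat i)\leq g(i^\star)$ directly instead of routing through the first claim.
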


\begin{proof}
Taking minima in $f-\epsilon\leq g\leq f+\epsilon$ proves the first
claim. For $i^\star\in\arg\min f$ and $\widehat i\in\arg\min g$,
\[
f(\widehat i)\leq g(\widehat i)+\epsilon
\leq g(i^\star)+\epsilon\leq f(i^\star)+2\epsilon.
\]
The strict margin rules out $\widehat i\neq i^\star$.
\end{proof}

\subsection{Proof of Reliability and Planning Stability}
\label{app:planning_stability}

\begin{proof}[Proof of Theorem~\ref{thm:planning_stability}]
For every $k$-element subset $\sJ\subseteq\sB$, its mean distances
under $d_{\mZ}$ and $d_{\mP}$ differ by at most
$\epsilon_{\mathrm{rel}}$.
Applying the first part of Lemma~\ref{lem:finite_minimum} to the finite
family of these subsets proves Eq.~\eqref{eq:theory_knn_stability}.

For planning, put $a=1+\varepsilon_0$ and
$\vv_i=\mS_{\mZ}^{-1}(\vz_i-\vz_g)$.
By definition, $\|\mS_{\mZ}/a-\mI_d\|_{\mathrm{op}}=\eta$, so
\begin{equation}
\begin{aligned}
\left|\frac{\|\vz_i-\vz_g\|_2}{\ell}-d_{\mZ}(i,g)\right|
&=\frac1{c_{\mZ}}
\left|\|(\mS_{\mZ}/a)\vv_i\|_2-\|\vv_i\|_2\right|\\
&\leq\frac{\|(\mS_{\mZ}/a-\mI_d)\vv_i\|_2}{c_{\mZ}}
\leq\eta\,d_{\mZ}(i,g).
\end{aligned}
\label{eq:app_raw_normalized}
\end{equation}
Combining this with the reverse triangle inequality and
$d_{\mZ}(i,g)\leq M+\epsilon_{\mathrm{rel}}$ gives, uniformly on $\sC$,
\begin{equation}
\begin{aligned}
\left|\frac{\|\widehat{\vz}_i-\vz_g\|_2}{\ell}-d_{\mP}(i,g)\right|
&\leq\frac e\ell+\eta d_{\mZ}(i,g)+\epsilon_{\mathrm{rel}}\\
&\leq\frac e\ell+(1+\eta)\epsilon_{\mathrm{rel}}+\eta M
=\epsilon_{\mathrm{plan}}.
\end{aligned}
\label{eq:app_uniform_planning_cost}
\end{equation}
Since $\ell>0$, minimizing $\|\widehat{\vz}_i-\vz_g\|_2^2$ is equivalent
to minimizing $h(i)=\|\widehat{\vz}_i-\vz_g\|_2/\ell$.
Lemma~\ref{lem:finite_minimum}, applied to $f(i)=d_{\mP}(i,g)$ and
$h(i)$, proves the regret bound.
\end{proof}

\paragraph{Margins and exact preservation.}
Under the assumptions of Theorem~\ref{thm:planning_stability}, including
$\eta<1$, a unique anchor minimizer $i^\star$ is guaranteed to be
selected whenever
\begin{equation}
d_{\mP}(i,g)-d_{\mP}(i^\star,g)>2\epsilon_{\mathrm{plan}}
\qquad(i\in\sC\setminus\{i^\star\}),
\label{eq:theory_planning_margin}
\end{equation}
by Lemma~\ref{lem:finite_minimum}; this condition is sufficient, not
necessary. Similarly, the kNN bound implies
\[
s_{\mP}(q_1)-s_{\mP}(q_2)>2\epsilon_{\mathrm{rel}}
\quad\Longrightarrow\quad s_{\mZ}(q_1)>s_{\mZ}(q_2),
\]
without requiring identical nearest-neighbor sets.
If $\epsilon_{\mathrm{rel}}=0$, all kNN scores agree on the pool.
If additionally $e=\eta=0$, every selected candidate is anchor-optimal,
and equals the anchor minimizer when it is unique.

\subsection{From One-Step Prediction to Terminal Error}
\label{app:rollout_error}

\begin{lemma}[Finite-horizon prediction error]
\label{lem:rollout_error}
Let $g:\R^d\times\sA\to\R^d$ be the learned predictor.
For each candidate $i$, let $\vz_{i,0},\ldots,\vz_{i,T}$ be its true
encoded trajectory, and define
$\widehat{\vz}_{i,0}=\vz_{i,0}$ and
$\widehat{\vz}_{i,t+1}=g(\widehat{\vz}_{i,t},\va_{i,t})$.
Assume, for every relevant candidate and time,
\[
\|g(\vz_{i,t},\va_{i,t})-\vz_{i,t+1}\|_2\leq\epsilon_{\mathrm{dyn}},
\]
and that $g(\cdot,\va)$ is $L$-Lipschitz on all true and predicted states
being compared, with $L\geq0$. Then
\[
\max_i\|\widehat{\vz}_{i,T}-\vz_{i,T}\|_2
\leq\epsilon_{\mathrm{dyn}}\sum_{t=0}^{T-1}L^t.
\]
\end{lemma}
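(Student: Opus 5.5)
The plan is a standard induction on the rollout step, tracking the error $\delta_t=\|\widehat{\vz}_{i,t}-\vz_{i,t}\|_2$ for a fixed candidate $i$. We then take the maximum over candidates at the end. Since the bound we derive does not depend on $i$, the maximum is harmless. The base case is immediate: $\widehat{\vz}_{i,0}=\vz_{i,0}$ gives $\delta_0=0$.

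For the inductive step, insert the one-step prediction from the true state and apply the triangle inequality:
\[
\delta_{t+1}
=\|g(\widehat{\vz}_{i,t},\va_{i,t})-\vz_{i,t+1}\|_2
\leq\|g(\widehat{\vz}_{i,t},\va_{i,t})-g(\vz_{i,t},\va_{i,t})\|_2
+\|g(\vz_{i,t},\va_{i,t})-\vz_{i,t+1}\|_2 .
\]
The first term is at most $L\delta_t$ by the Lipschitz assumption on $g(\cdot,\va_{i,t})$. This is legitimate because the assumption is stated on all true and predicted states being compared, and $\widehat{\vz}_{i,t}$ and $\vz_{i,t}$ are exactly such a pair. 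The second term is at most $\epsilon_{\mathrm{dyn}}$ by the one-step accuracy assumption at a true state. Hence $\delta_{t+1}\leq L\delta_t+\epsilon_{\mathrm{dyn}}$.

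Unrolling this linear recursion from $\delta_0=0$ gives, by induction,
\[
\delta_T\leq\epsilon_{\mathrm{dyn}}\sum_{t=0}^{T-1}L^t .
\]
To verify: if $\delta_t\leq\epsilon_{\mathrm{dyn}}\sum_{s<t}L^s$, then $L\delta_t+\epsilon_{\mathrm{dyn}}\leq\epsilon_{\mathrm{dyn}}\bigl(1+\sum_{s<t}L^{s+1}\bigr)=\epsilon_{\mathrm{dyn}}\sum_{s\leq t}L^s$. The case $L=0$ is covered with the convention $L^0=1$, and $T=0$ gives the empty sum, consistent with $\delta_0=0$. Taking the maximum over $i$ yields the claim.

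There is no real obstacle here. The only point needing care is the Lipschitz step: it must be applied at the pair consisting of a predicted and a true state, with the same action $\va_{i,t}$, and this is precisely the scope in which the lemma's hypothesis is stated. The resulting bound can then be fed into the quantity $e$ in Theorem~\ref{thm:planning_stability}.
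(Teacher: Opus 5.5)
Your proposal is correct and follows essentially the same argument as the paper: the triangle inequality plus the Lipschitz and one-step accuracy assumptions give the recursion $\delta_{t+1}\leq L\delta_t+\epsilon_{\mathrm{dyn}}$ with $\delta_0=0$, which is then unrolled. The only cosmetic difference is that the paper runs the recursion directly on $e_t=\max_i\|\widehat{\vz}_{i,t}-\vz_{i,t}\|_2$, whereas you fix a candidate and take the maximum at the end; the two are equivalent here.
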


\begin{proof}
The assumptions give $e_{t+1}\leq Le_t+\epsilon_{\mathrm{dyn}}$ for
$e_t=\max_i\|\widehat{\vz}_{i,t}-\vz_{i,t}\|_2$, with $e_0=0$.
Iterating proves the claim for every $L\geq0$, including $L=1$.
\end{proof}

\subsection{Connection to a Task-Aligned Terminal Cost}
\label{app:task_cost}

\begin{corollary}[Task-cost regret]
\label{cor:task_cost}
Under Theorem~\ref{thm:planning_stability}, suppose
$J_{\mathrm{task}}:\sC\to\R$ satisfies
$|J_{\mathrm{task}}(i)-d_{\mP}(i,g)|\leq\epsilon_{\mathrm{anchor}}$
for every $i\in\sC$, where $\epsilon_{\mathrm{anchor}}\geq0$. Then
\[
J_{\mathrm{task}}(\widehat i)-\min_{i\in\sC}J_{\mathrm{task}}(i)
\leq2(\epsilon_{\mathrm{plan}}+\epsilon_{\mathrm{anchor}}).
\]
\end{corollary}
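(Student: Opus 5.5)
The corollary follows from two uniform approximations combined through Lemma~\ref{lem:finite_minimum}. The proof of Theorem~\ref{thm:planning_stability} (Eq.~\eqref{eq:app_uniform_planning_cost}) gives, uniformly on $\sC$,
\[
|h(i)-d_{\mP}(i,g)|\leq\epsilon_{\mathrm{plan}},
\qquad
h(i)=\|\widehat{\vz}_i-\vz_g\|_2/\ell .
\]
The selected candidate $\widehat i$ minimizes $\|\widehat{\vz}_i-\vz_g\|_2^2$. Since $\ell>0$ and squaring is monotone on nonnegative numbers, $\widehat i$ also minimizes $h$ over $\sC$.

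The first step is to chain the two approximations with the triangle inequality. For every $i\in\sC$,
\[
|h(i)-J_{\mathrm{task}}(i)|\leq|h(i)-d_{\mP}(i,g)|+|d_{\mP}(i,g)-J_{\mathrm{task}}(i)|\leq\epsilon_{\mathrm{plan}}+\epsilon_{\mathrm{anchor}}.
\]
This bound holds uniformly on the finite nonempty set $\sC$.

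Next, apply the second conclusion of Lemma~\ref{lem:finite_minimum} with $f=J_{\mathrm{task}}$, $g=h$, and $\epsilon=\epsilon_{\mathrm{plan}}+\epsilon_{\mathrm{anchor}}\geq0$. Since $\widehat i\in\arg\min_{\sC}h$, the lemma gives
\[
J_{\mathrm{task}}(\widehat i)-\min_{\sC}J_{\mathrm{task}}\leq2(\epsilon_{\mathrm{plan}}+\epsilon_{\mathrm{anchor}}),
\]
which is the claim.

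No step is a real obstacle. The only point needing care is that the uniform bound on $|h-d_{\mP}(\cdot,g)|$ must be the one the theorem's proof establishes, with the same normalizers and the same $\ell$ as in the theorem. Applying Lemma~\ref{lem:finite_minimum} only to the regret statement would not be enough: it would give $d_{\mP}$-regret $2\epsilon_{\mathrm{plan}}$, and transferring that to $J_{\mathrm{task}}$ would cost $2\epsilon_{\mathrm{plan}}+2\epsilon_{\mathrm{anchor}}$ only after a similar triangle-inequality argument. Using the uniform bound directly is therefore the cleaner route.
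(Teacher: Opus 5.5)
Your proof is correct. It uses the same two ingredients as the paper, the planning approximation $\epsilon_{\mathrm{plan}}$ and the anchor alignment $\epsilon_{\mathrm{anchor}}$ combined through Lemma~\ref{lem:finite_minimum}, but composes them in the opposite order.

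The paper takes exactly the route you dismiss in your last paragraph. It first writes $J_{\mathrm{task}}(\widehat i)-\min_{i\in\sC}J_{\mathrm{task}}(i)\leq d_{\mP}(\widehat i,g)-\min_{i\in\sC}d_{\mP}(i,g)+2\epsilon_{\mathrm{anchor}}$, using upper alignment at $\widehat i$ and lower alignment at a $J_{\mathrm{task}}$-minimizer. It then applies the theorem's regret bound $2\epsilon_{\mathrm{plan}}$. That route is fully sufficient and gives the identical constant, so your remark that it ``would not be enough'' is mistaken.

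The paper's ordering is more modular. It uses only the theorem's stated conclusion, so it covers any selection rule with anchor regret at most $2\epsilon_{\mathrm{plan}}$. Your ordering instead reaches back into the proof for the stronger uniform bound (\ref{eq:app_uniform_planning_cost}). In exchange, the first part of the lemma also gives you $\bigl|\min_{\sC}h-\min_{\sC}J_{\mathrm{task}}\bigr|\leq\epsilon_{\mathrm{plan}}+\epsilon_{\mathrm{anchor}}$.
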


\begin{proof}
By anchor alignment and the planning regret bound,
\[
\begin{aligned}
J_{\mathrm{task}}(\widehat i)-\min_{i\in\sC}J_{\mathrm{task}}(i)
&\leq d_{\mP}(\widehat i,g)-\min_{i\in\sC}d_{\mP}(i,g)
     +2\epsilon_{\mathrm{anchor}}\\
&\leq2(\epsilon_{\mathrm{plan}}+\epsilon_{\mathrm{anchor}}).
\end{aligned}
\]
\end{proof}

\begin{figure}[h]
\centering
\includegraphics[width=0.53\textwidth]{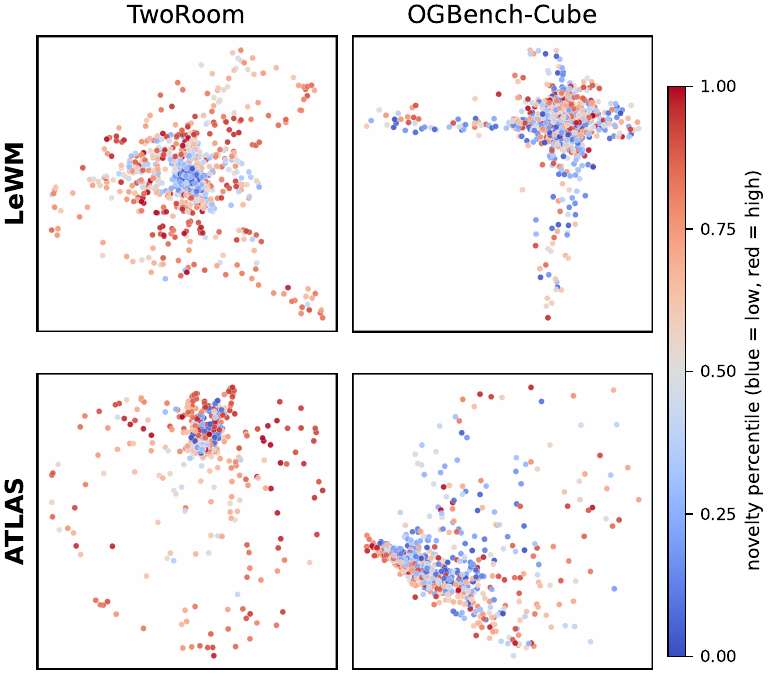}
\caption{\textbf{Planning-latent geometry on TwoRoom and OGBench-Cube.}
Two-dimensional PCA projections of $z$ for LeWM (top) and ATLAS (bottom),
companion to the PushT panels in \Cref{fig:ood-scatter}. Each point is a
state, colored by its model-independent true-state novelty percentile, from
lower novelty in blue to higher novelty in red. The separation between low-
and high-novelty states is less pronounced than on PushT, but the ATLAS
projections still place high-novelty states farther from the in-distribution
cluster than LeWM does.}
\label{fig:ood-scatter-appendix}
\end{figure}

\begin{figure*}[h]
\centering
\includegraphics[width=\textwidth]{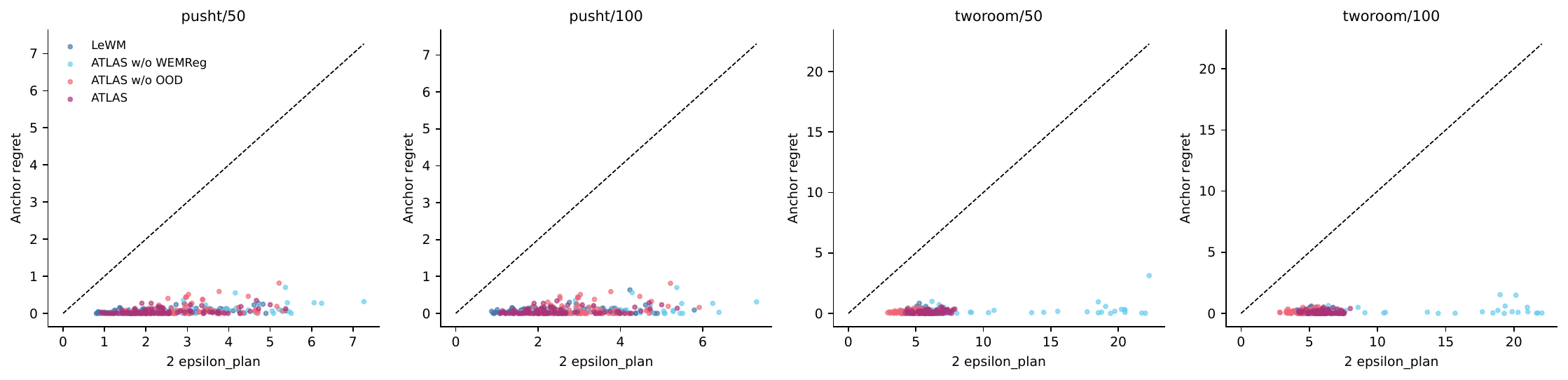}
\caption{\textbf{Bound versus regret.}
Each point is one arm-query evaluation; the dashed line is equality. The conservative bound and absent sufficient-margin events limit quantitative interpretation.}
\label{fig:regret}
\end{figure*}

\begin{figure*}[t]
\centering
\includegraphics[width=\textwidth]{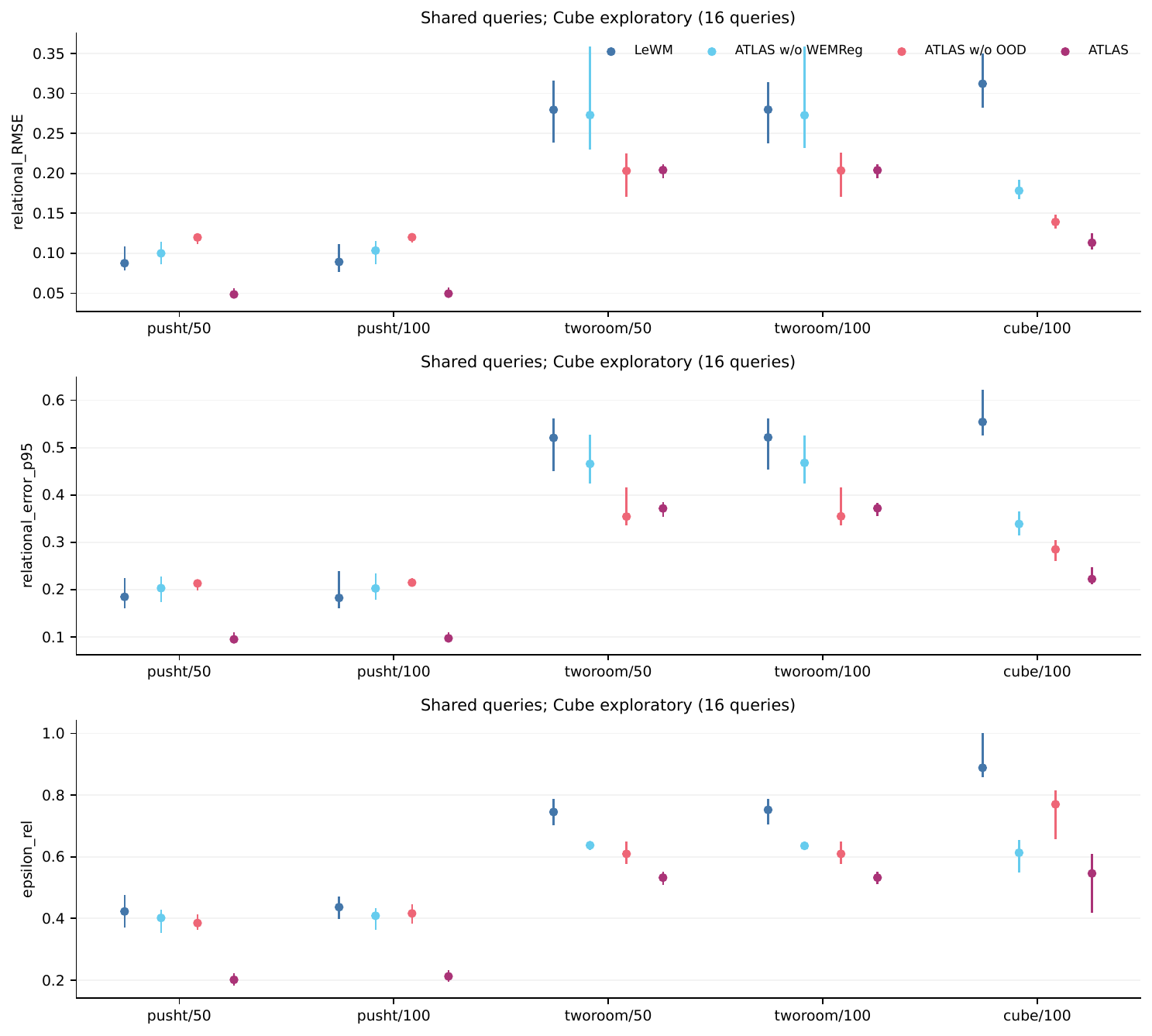}
\caption{\textbf{Relational metrics.}
Across-query medians and paired-bootstrap percentile intervals for within-query pairwise RMSE, 95th-percentile distortion, and maximum distortion. PushT/TwoRoom each use 100 queries, Cube 16 exploratory queries. All four arms use controlled checkpoints.}
\label{fig:reme}
\end{figure*}

\begin{figure*}[t]
\centering
\includegraphics[width=\textwidth]{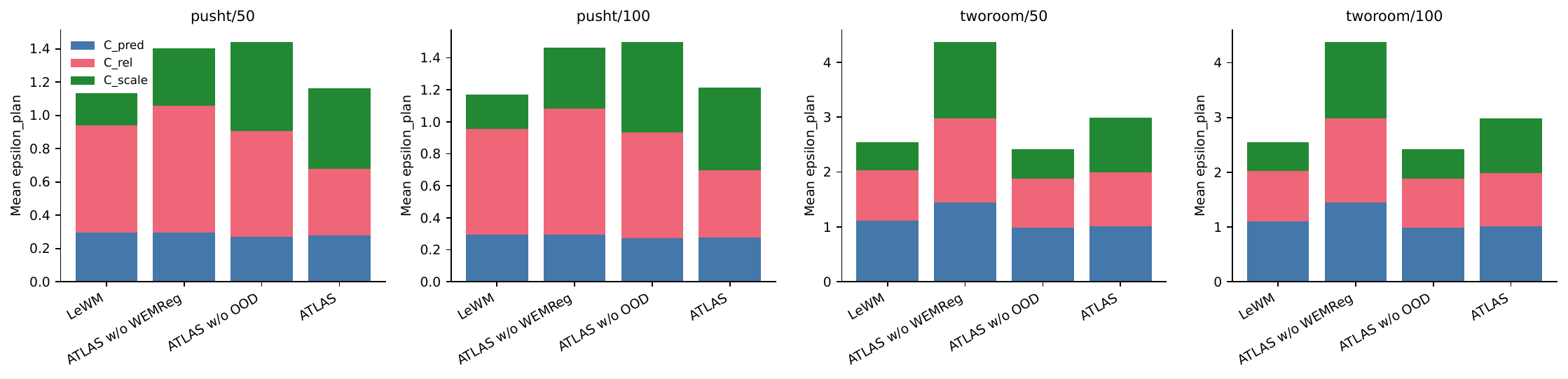}
\caption{\textbf{Theorem 3 decomposition.}
Bars show mean prediction, relational, and scale contributions to epsilon plan on the original queries. This is a structural decomposition, not a quantitative planning-success predictor.}
\label{fig:theo3}
\end{figure*}
% \section{Additional qualitative case studies}
% \label{app:cases}
% \Cref{fig:cases-appendix} shows the PushT and OGBench-Cube counterparts of the TwoRoom case
% study in \Cref{fig:cases}: one selected episode per task in which LeWM fails and ATLAS
% succeeds. As in the main text, these are qualitative success/failure pairs, not
% representative samples of all evaluation episodes.

\begin{figure}[h]
\centering
\includegraphics[width=\linewidth]{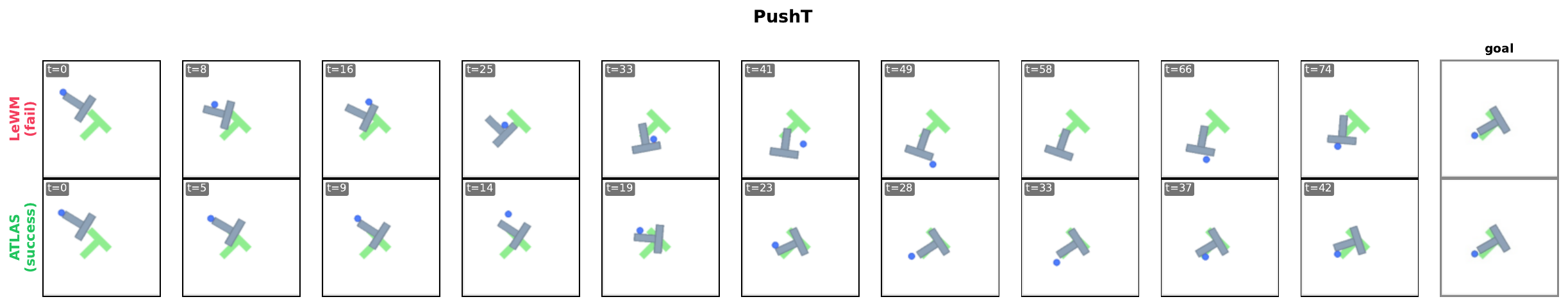}\\[4pt]
\includegraphics[width=\linewidth]{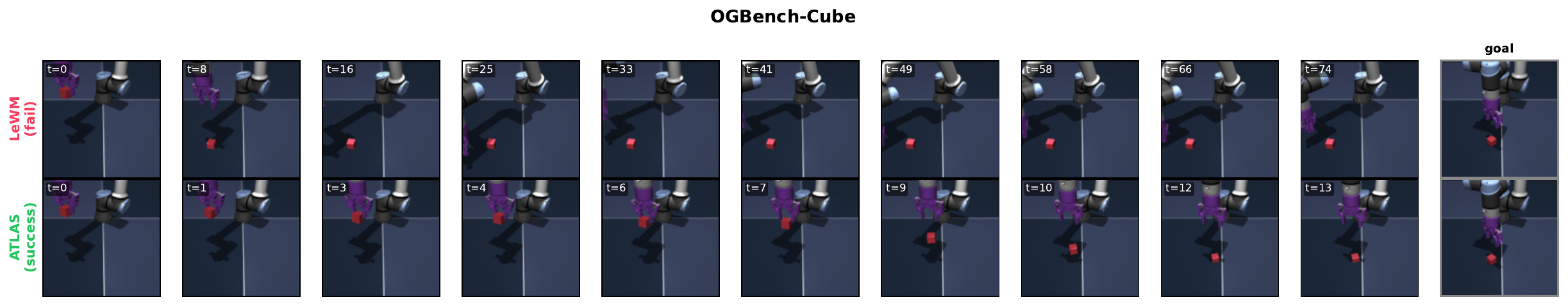}
\caption{\textbf{Selected closed-loop planning trajectories on PushT (top) and
OGBench-Cube (bottom).} One episode per task in which LeWM fails and ATLAS succeeds (LeWM
above ATLAS in each panel; frame indices indicate elapsed steps, goal in the rightmost
column). On PushT, ATLAS rotates and aligns the T-shaped block with the target pose; on
OGBench-Cube, ATLAS carries the cube to the target configuration.}
\label{fig:cases-appendix}
\end{figure}

\end{document}